\documentclass[nohyperref]{article}

% Recommended, but optional, packages for figures and better typesetting:
\usepackage{microtype}
\usepackage{graphicx}
\usepackage{subfigure}
\usepackage{booktabs} % for professional tables

% hyperref makes hyperlinks in the resulting PDF.
% If your build breaks (sometimes temporarily if a hyperlink spans a page)
% please comment out the following usepackage line and replace
% \usepackage{icml2023} with \usepackage[nohyperref]{icml2023} above.
\usepackage{hyperref}

% Attempt to make hyperref and algorithmic work together better:

% Use the following line for the initial blind version submitted for review:
% \usepackage{icml2023}

% If accepted, instead use the following line for the camera-ready submission:
\usepackage[accepted]{icml2023}

% For theorems and such
\usepackage{amsmath}
\usepackage{amssymb}
\usepackage{mathtools}
\usepackage{amsthm}
\usepackage{physics}
\usepackage{xcolor}         % colors
\usepackage{thmtools}
\usepackage{thm-restate}

% if you use cleveref..
\usepackage[capitalize,noabbrev]{cleveref}

\definecolor{mydarkblue}{rgb}{0,0.08,0.45}
\hypersetup{ %
    colorlinks=true,
    linkcolor=mydarkblue,
    citecolor=mydarkblue,
    filecolor=mydarkblue,
    urlcolor=mydarkblue,
}

%%%%%%%%%%%%%%%%%%%%%%%%%%%%%%%%
% THEOREMS
%%%%%%%%%%%%%%%%%%%%%%%%%%%%%%%%
\theoremstyle{plain}
\newtheorem{theorem}{Theorem}[section]

\newtheorem{lemma}[theorem]{Lemma}

\theoremstyle{definition}

\theoremstyle{remark}
\newtheorem{remark}[theorem]{Remark}

\DeclareMathOperator*{\argmin}{argmin}

\newcommand{\Gauss}{\mathcal{N}}

\newcommand{\E}{\mathbb{E}}
\newcommand{\Prob}{\mathbb{P}}
\newcommand{\R}{\mathbb{R}}
\newcommand{\N}{\mathbb{N}}
\newcommand{\supp}{\mathrm{supp}}
\newcommand{\Id}{\mathrm{I}}

\newcommand{\polylog}{\mathrm{polylog}}
\newcommand{\inrprod}[2]{\left\langle #1, #2 \right\rangle}
\newcommand\numberthis{\addtocounter{equation}{1}\tag{\theequation}}
\numberwithin{equation}{section}

% Yu

%\renewcommand{\hat}{\widehat}

% Todonotes is useful during development; simply uncomment the next line
%    and comment out the line below the next line to turn off comments
%\usepackage[disable,textsize=tiny]{todonotes}
\usepackage[textsize=tiny]{todonotes}

% The \icmltitle you define below is probably too long as a header.
% Therefore, a short form for the running title is supplied here:
\icmltitlerunning{On the Benefits of Masking for Sparse Coding}

\begin{document}

\twocolumn[
\icmltitle{Hiding Data Helps: On the Benefits of Masking for Sparse Coding}

% It is OKAY to include author information, even for blind
% submissions: the style file will automatically remove it for you
% unless you've provided the [accepted] option to the icml2023
% package.

% List of affiliations: The first argument should be a (short)
% identifier you will use later to specify author affiliations
% Academic affiliations should list Department, University, City, Region, Country
% Industry affiliations should list Company, City, Region, Country

% You can specify symbols, otherwise they are numbered in order.
% Ideally, you should not use this facility. Affiliations will be numbered
% in order of appearance and this is the preferred way.
\icmlsetsymbol{equal}{*}

\begin{icmlauthorlist}
\icmlauthor{Muthu Chidambaram}{duke}
\icmlauthor{Chenwei Wu}{duke}
\icmlauthor{Yu Cheng}{brown}
\icmlauthor{Rong Ge}{duke}
\end{icmlauthorlist}

\icmlaffiliation{duke}{Department of Computer Science, Duke University}
\icmlaffiliation{brown}{Department of Computer Science, Brown University}

\icmlcorrespondingauthor{Muthu Chidambaram}{muthu@cs.duke.edu}

% You may provide any keywords that you
% find helpful for describing your paper; these are used to populate
% the "keywords" metadata in the PDF but will not be shown in the document
\icmlkeywords{Self-Supervision, Masking, Sparse Coding, Dictionary Learning, Overparameterization, Over-realization}

\vskip 0.3in
]

% this must go after the closing bracket ] following \twocolumn[ ...

% This command actually creates the footnote in the first column
% listing the affiliations and the copyright notice.
% The command takes one argument, which is text to display at the start of the footnote.
% The \icmlEqualContribution command is standard text for equal contribution.
% Remove it (just {}) if you do not need this facility.

\printAffiliationsAndNotice{}  % leave blank if no need to mention equal contribution
%\printAffiliationsAndNotice{\icmlEqualContribution} % otherwise use the standard text.

\begin{abstract}
Sparse coding, which refers to modeling a signal as sparse linear combinations of the elements of a learned dictionary, has proven to be a successful (and interpretable) approach in applications such as signal processing, computer vision, and medical imaging. While this success has spurred much work on provable guarantees for dictionary recovery when the learned dictionary is the same size as the ground-truth dictionary, work on the setting where the learned dictionary is larger (or \textit{over-realized}) with respect to the ground truth is comparatively nascent. Existing theoretical results in this setting have been constrained to the case of noise-less data. We show in this work that, in the presence of noise, minimizing the standard dictionary learning objective can fail to recover the elements of the ground-truth dictionary in the over-realized regime, regardless of the magnitude of the signal in the data-generating process. Furthermore, drawing from the growing body of work on self-supervised learning, we propose a novel masking objective for which recovering the ground-truth dictionary is in fact optimal as the signal increases for a large class of data-generating processes. We corroborate our theoretical results with experiments across several parameter regimes showing that our proposed objective also enjoys better empirical performance than the standard reconstruction objective.
\end{abstract}

\section{Introduction}
Modeling signals as sparse combinations of latent variables has been a fruitful approach in a variety of domains, and has been especially useful in areas such as medical imaging \citep{zhang2017medical}, neuroscience \citep{olshausen2004sparse}, and genomics \citep{tibshirani2008spatial}, where learning parsimonious representations of data is of high importance. The particular case of modeling data in some high-dimensional space $\R^d$ as sparse \textit{linear} combinations of a set of $p$ vectors in $\R^d$ (referred to as a \textit{dictionary}) has received significant attention over the past two decades, leading to the development of many successful algorithms and theoretical frameworks.

In this case, the typical assumption is that we are given data $y_i$ generated as $y_i \sim Az_i + \epsilon_i$, where $A \in \R^{d \times p}$ is a ground truth dictionary, $z_i$ is a sparse vector, and $\epsilon_i$ is some potentially non-zero noise. When the dictionary $A$ is known a priori, the goal of modeling is to recover the sparse representations $z_i$, and the problem is referred to as \textit{compressed sensing}. However, in many applications we do not have access to the ground truth $A$, and instead hope to simultaneously learn a dictionary $B$ that approximates $A$ along with learning sparse representations of the data.

This problem is referred to as \textit{sparse coding} or \textit{sparse dictionary learning}, and is the focus of this work. One of the primary goals of analyses of sparse coding is to provide provable guarantees on how well one can hope to recover the ground truth dictionary $A$, both with respect to specific algorithms and information theoretically. Prior work on such guarantees has focused almost exclusively on the setting where the learned dictionary $B$ also belongs to $\R^{d \times p}$ (same space as the ground truth), which is in line with the fact that recovery error is usually formulated as some form of the Frobenius norm of the difference between $B$ and $A$.

Unfortunately, in practice, one does not necessarily have access to the structure of $A$, and it is thus natural to consider what happens (and how to formulate recovery error) when learning a $B \in \R^{d \times p'}$ with $p' \neq p$. Of particular interest is the case where $p' > p$, where it is possible to recover $A$ as a sub-dictionary of $B$.

The study of this \textit{over-realized} setting was recently taken up in the work of \citet{overrealized}, in which the authors showed (perhaps surprisingly) that a modest level of over-realization can be empirically and theoretically beneficial.%\footnote{As in \citet{overrealized}, we use the terminology {\em over-realized} instead of {\em overparameterized} to distinguish that the learned dictionary is overparameterized with respect to the ground-truth dictionary, \textit{not} the number of training samples.}

However, the results of \citet{overrealized} are restricted to the noise-less setting where data is generated simply as $y_i \sim Az_i$. We thus ask the following questions:

\begin{quote}
\em
Does over-realized sparse coding run into pitfalls when there is noise in the data-generating process? And if so, is it possible to prevent this by designing new sparse coding algorithms?
\end{quote}
%\yc{Make these questions sharper and put them in quote?}

\subsection{Main Contributions and Outline}
%\yc{This subsection needs to expand. This is arguably the most important subsection we want people to remember. We want to tell them what the results are and emphasize our novelty, in a way that they can understand and appreciate. Is it possible to introduce certain informal versions of $L(A, k)$ and $L(B, k)$ and summarize our formal theorems in a way that readers can understand at a high level what we proved?}
% In this work, we answer both of these questions in the affirmative. We show in our Theorem \ref{overfit} that, as intuition would lead one to suspect, over-realization allows the learned dictionary to better approximate the noise in the data-generating process and thus leads to problems when minimizing standard sparse coding objectives. On the other hand, we also show in our Theorem \ref{masking} that it is possible to modify the standard objectives using ideas from self-supervised learning to circumvent this noise memorization issue, both theoretically and practically. 
In this work, we answer both of these questions in the affirmative. After providing the necessary background on sparse coding in Section \ref{prelim}, we show in Theorem \ref{overfit} of Section \ref{main} that, as intuition would lead one to suspect, using standard sparse coding algorithms for learning over-realized dictionaries in the presence of noise leads to overfitting. In fact, our result shows that even if we allow the algorithms access to infinitely many samples and allow for solving NP-hard optimization problems, the learned dictionary $B$ can still fail to recover $A$.

The key idea behind this result is that existing approaches to sparse coding rely largely on a two-step procedure (outlined in Algorithm \ref{genalgo}) of solving the compressed sensing problem $B\hat{z} = y_i$ for a learned dictionary $B$, and then updating $B$ based on a reconstruction objective $\norm{y_i - B\hat{z}}^2$. However, because we force $\hat{z}$ to be sparse, by choosing $B$ to have columns that correspond to linear combinations of the columns of $A$, we can effectively ``cheat'' and get around the sparsity constraint on $\hat{z}$. In this way, it can be optimal for reconstructing the data $y_i$ to not recover $A$ as a sub-dictionary of $B$.

On the other hand, we show in Theorem \ref{masking} that for a large class of data-generating processes, it is possible to prevent this kind of cheating in $B$ by performing the compressed sensing step on a subset of the dimensions $y_i$ and computing the reconstruction loss on the complement of that subset. This is the idea of \textit{masking} that has seen great success in large language modeling \citep{devlin-etal-2019-bert}, and our result shows that it can lead to provable benefits even in the context of sparse coding.

Finally, in Section \ref{experiments} we conduct experiments comparing the standard sparse coding approach to our masking approach across several parameter regimes. In all of our experiments, we find that the masking approach leads to better ground truth recovery, with this being more pronounced as the amount of over-realization increases.
% \textbf{Main Contributions and Outline.} The rest of the paper is organized as follows. In Section \ref{prelim}, we provide the necessary background on sparse coding required to understand our theoretical and experimental results. 

% Section \ref{main} contains our main theoretical results, where we show that optimizing the canonical sparse coding objective in the noisy over-realized setting does not lead to ground truth recovery regardless of the distribution on the sparse representations $z_i$ in the data-generating process. On the other hand, for a large class of distributions on the $z_i$, we show that minimizing a modified masking-based objective does lead to ground truth recovery as the expected norm of the representation $z_i$ increases. 

% Finally, Section \ref{experiments} contains experiments comparing the performance of the standard and masking objectives across several experimental setups where we vary the amount of over-realization $p'$ in the learned dictionary and the variance of the noise $\epsilon$. In all of our experiments, we find that optimizing the masking objective leads to better ground truth recovery, with this being more pronounced as $p'$ increases.
\subsection{Related Work} \label{relwork}
\textbf{Compressed Sensing.} The seminal works of \citet{candestao1}, \citet{candestao2}, and \citet{donohocs} established conditions on the dictionary $A \in \R^{d \times p}$, even in the case where $p \gg d$ (the \textit{overcomplete} case), under which it is possible to recover (approximately and exactly) the sparse representations $z_i$ from $Az_i + \epsilon_i$. In accordance with these results, several efficient algorithms based on convex programming \citep{troppconvex, bregmanconvex}, greedy approaches \citep{omptropp, donohostable, lars}, iterative thresholding \citep{daubechies, donohothresh}, and approximate message passing \citep{amp, amp2018} have been developed for solving the compressed sensing problem. There has also been work on modifying these approaches to include a cross-validation step \citep{boufounos2007, ward2009}, which is similar to the idea of our masking objective. For comprehensive reviews on the theory and applications of compressed sensing, we refer the reader to the works of \citet{candesrev} and \citet{duarterev}.

\textbf{Sparse Coding.} Different framings of the sparse coding problem exist in the literature \citep{krausesparse, convexsparse, NIPS2009_cfecdb27}, but the canonical formulation involves solving a non-convex optimization problem. Despite this hurdle, a number of algorithms \citep{engan, ksvd, onlinesparse, arora13, arora14, arora15} have been established to (approximately) solve the sparse coding problem under varying conditions, dating back at least to the groundbreaking work of \citet{OLSHAUSEN19973311} in computational neuroscience. A summary of convergence results and the conditions required on the data-generating process for several of these algorithms may be found in Table 1 of \citet{sparsespurious}.

In addition to algorithm-specific analyses, there also exists a complementary line of work on characterizing the optimization landscape of dictionary learning. This type of analysis is carried out by \citet{sparsespurious} in the general setting of an overcomplete dictionary and noisy measurements with possible outliers, extending the previous line of work of \citet{AHARON200648}, \citet{schnass}, and \citet{geng}.

However, as mentioned earlier, these theoretical results rely on learning dictionaries that are the same size as the ground truth. To the best of our knowledge, the over-realized case has only been studied by \citet{overrealized}, and our work is the first to analyze over-realized sparse coding in the presence of noise.

\textbf{Self-Supervised Learning.} Training models to predict masked out portions of the input data is an approach to self-supervised learning that has led to strong empirical results in the deep learning literature \citep{devlin-etal-2019-bert, NEURIPS2019_dc6a7e65, NEURIPS2020_1457c0d6, He_2022_CVPR}. This success has spurred several theoretical studies analyzing how and why different self-supervised tasks can be used to improve model training \citep{tsai2020self, lee2021predicting, tosh2021contrastive}. The most closely related works to our own in this regard have studied the use of masking objectives in autoencoders \citep{cao2022understand, pan2022towards} and hidden Markov models \citep{wei2021pretrained}.

\section{Preliminaries and Setup}\label{prelim}
We first introduce some notation that we will use throughout the paper.

\textbf{Notation.} Given $n \in \N$, we use $[n]$ to denote the set $\{1, 2, ..., n\}$. For a vector $x$, we write $\norm{x}$ for the $\mathcal{L}_2$-norm of $x$ and $\norm{x}_0$ for the number of non-zeros in $x$. We say a vector $x$ is $k$-sparse if $\norm{x}_0 \le k$ and we use $\supp{(x)}$ to denote the support of $x$. For a vector $x \in \R^d$ and a set $S \subseteq [d]$, we use $[x]_S \in \R^{|S|}$ to denote the restriction of $x$ to those coordinates in $S$.

For a matrix $A$, we use $A_i$ to denote the $i$-th column of $A$. We write $\norm{A}_F$ for the Frobenius norm of $A$, and $\norm{A}_{op}$ for the operator norm of $A$, and we write $\sigma_{\min}(A)$ and $\sigma_{\max}(A)$ for the minimum and maximum singular values of $A$. For a matrix $A \in \R^{d \times q}$ and $S \subseteq [q]$, we use $A_S \in \R^{d \times |S|}$ to refer to $A$ restricted to the columns whose indices are in $S$. We use $\Id_d$ to denote the $d \times d$ identity matrix. Finally, for $M \subseteq [d]$, we use $P_M \in \R^{\abs{M} \times d}$ to refer to the matrix whose action on $x$ is $P_M x = [x]_M$. Note that for a $d\times q$ matrix $A$, $P_M A$ would give a subset of rows of $A$, which is different from the earlier notation $A_S$ which gives a subset of columns.

\subsection{Background on Sparse Coding}\label{sparsebgrnd}
We consider the sparse coding problem in which we are given measurements $y \in \R^d$ generated as $Az + \epsilon$, where $A \in \R^{d \times p}$ is a ground-truth dictionary, $z \in \R^p$ is a $k$-sparse vector distributed according to a probability measure $\Prob_z$, and $\epsilon \in \R^d$ is a noise term with i.i.d. entries. The goal is to use the measurements $y$ to reconstruct a dictionary $B$ that is as close as possible to the ground-truth dictionary $A$.

In the case where $B$ has the same dimensions as $A$, one may want to formulate this notion of ``closeness'' (or recovery error) as $\norm{A - B}_F^2$. However, directly using the Frobenius norm of $(A-B)$ is too limited, as it is sufficient to recover the columns of $A$ up to permutations and sign flips. Therefore, a common choice of recovery error \citep{sparsespurious, arora15} is the following:
\begin{align*}
    \min_{P \in \Pi} \norm{A - BP}_F^2 \numberthis \label{permutedist}
\end{align*}
where $\Pi$ is the set of orthogonal matrices whose entries are $0$ or $\pm 1$.

In the over-realized setting, when $B \in \R^{d \times p'}$ with $p' > p$, Equation \eqref{permutedist} no longer makes sense as $A$ and $B$ do not have the same size. In this case, one can generalize Equation \eqref{permutedist} to measure the distance between each column of $A$ and the column closest to it in $B$ (up to change of sign). This notion of recovery was studied by \citet{overrealized}, and we use the same formulation in this work:
\begin{align*}
    d_R(A, B) \triangleq \frac{1}{p} \sum_{i = 1}^p \min_{\substack{j \in [p'], c \in \{-1, 1\}}} \norm{A_i - cB_j}^2 \numberthis \label{otherdist}
\end{align*}
Note that Equation \eqref{otherdist} introduced the coefficient $1/p$ in the recovery error and thus corresponds to the \textit{average} distance between $A_i$ and its best approximation in $B$. %, as opposed to the total distance as in Equation~\eqref{permutedist}.
Also, Equation~\eqref{otherdist} only allows sign changes, even though for reconstructing $Az$, it is sufficient to recover the columns of $A$ up to arbitrary scaling. In our experiments we enforce $A$ and $B$ to have unit column norms so a sign change suffices; in theory one can always modify the $B$ matrix to have correct norm so it also does not change our results.% For our results, it is indeed possible to extend Equation \eqref{otherdist} to allow scaling (by measuring the distance between normalized vectors), but we choose to present our results in terms of $d_R(A, B)$ for clarity.

Given access to only measurements $y$, the algorithm cannot directly minimize the recovery error $d_R(A, \cdot)$. Instead, sparse coding algorithms often seek to minimize the following surrogate loss:
\begin{align*}
    \ell(B) = \E_y \left[\min_{\hat{z} \in \R^{p'}} \norm{y - B \hat{z}} ^ 2 + h(\hat{z})\right] \numberthis \label{recon1}
\end{align*}
where $h$ is a sparsity-promoting penalty function. Typical choices of $h$ include hard sparsity ($h(\hat{z}) = 0$ if $\hat{z}$ is $k$-sparse and $h(\hat{z}) = \infty$ otherwise) as well as the $\mathcal{L}_1$ penalty $h(\hat{z}) = \norm{\hat{z}}_1$. While hard sparsity is closer to the assumption on the data-generating process, it is well-known that optimizing under exact sparsity constraints is NP-hard in the general case \citep{l0hard}. When $h(\hat{z}) = \norm{\hat{z}}_1$ is used, the learning problem is also known as basis pursuit denoising \citep{chen1994basis} or Lasso \citep{lasso}.

Equation \eqref{recon1} is the population loss one wishes to minimize when learning a dictionary $B$. In practice, sparse coding algorithms must work with a finite number of measurements $y_1, y_2, \ldots, y_n$ obtained from the data-generating process and instead minimize the empirical loss $\Tilde{\ell}(B)$:
\begin{align*}
    \Tilde{\ell}(B) = \sum_{i = 1}^n \min_{\hat{z} \in \R^{p'}} \norm{y_i - B \hat{z}} ^ 2 + h(\hat{z}) \numberthis \label{emprecon1}
\end{align*}

\subsection{Sparse Coding via Orthogonal Matching Pursuit}\label{sparsealgo}
%\yc{Is ``Decoding Algorithms for Sparse Coding'' or ``Orthogonal Matching Pursuit'' a better subsection title?}
%There are many approaches for optimizing the empirical versions of the objectives in Equations \eqref{recon1} and \eqref{recon2}. 
Most existing approaches for optimizing Equation \eqref{emprecon1} can be categorized under the general alternating minimization approach described in Algorithm \ref{genalgo}. For simplicity we state Algorithm \ref{genalgo} in terms of a single input signal $y \in \R^d$, but in practice the dictionary update in Algorithm \ref{genalgo} is performed after batching over several input signals. % (which is essentially Algorithm 1 in \citet{arora15}).

\begin{algorithm}[ht]
   \caption{Alternating Minimization Framework}
\begin{algorithmic}\label{genalgo}
   \STATE {\bfseries Input:} Data $y \in \R^d$, Dictionary $B^{(t)} \in \R^{d \times p'}$
   \STATE {\bfseries Decoding Step: } Solve $B^{(t)} \hat{z} = y$ for $k$-sparse $\hat{z}$ 
   \STATE {\bfseries Update Step: } Update $B^{(t)}$ to $B^{(t + 1)}$ by performing a gradient step on loss computed using $B^{(t)}\hat{z}$ and $y$
\end{algorithmic}
\end{algorithm}

At iteration $t$, Algorithm~\ref{genalgo} performs a decoding/compressed sensing step using the current learned dictionary $B^{(t)}$ and the input data $y$. As mentioned in Section \ref{relwork}, there are several well-studied algorithms for this decoding step. Because we are interested in enforcing a hard-sparsity constraint, we restrict our attention to algorithms that are guaranteed to produce a $k$-sparse representation in the decoding step. % (when the sparsity $k$ is known beforehand).

We thus focus on Orthogonal Matching Pursuit (OMP) \citep{omporiginal, Rubinstein2008EfficientIO}, which is a simple greedy algorithm for the decoding step. The basic procedure of OMP is to iteratively expand a subset $T \subset [p']$ of atoms (until $\abs{T} = k$) by considering the correlation between the unselected atoms in the current dictionary $B^{(t)}$ and the residual $\left(y - B^{(t)}_T \argmin_{\hat{z} \in \R^{\abs{T}}} \norm{y - B^{(t)}_T \hat{z}}^2\right)$ (i.e., the least squares solution using atoms in $T$). A more precise description of the algorithm can be found in \citet{Rubinstein2008EfficientIO}. Moving forward, we will use $g_{\textnormal{OMP}}(y, B, k)$ to denote the $k$-sparse vector $\hat z \in \R^{p'}$ obtained by running the OMP algorithm on an input dictionary $B$ and a measurement $y$.

\subsection{Conditions on the Data-Generating Process}\label{setting}
For the data-generating process $y \sim Az + \epsilon$, it is in general impossible to successfully perform the decoding step in Algorithm \ref{genalgo} even with access to the ground-truth dictionary $A$. As a result, several conditions have been identified in the literature under which it is possible to provide guarantees on the success of decoding the sparse representation $z$. We recall two of the most common ones~\citep{rip}.

\begin{restatable}{definition}{rip}[{\it Restricted Isometry Property (RIP)}]\label{ripdef}
We say that a matrix $A \in \R^{d \times p}$ satisfies $(s, \delta_s)$-RIP if the following holds for all $s$-sparse $x \in \R^p$:
\begin{align*}
(1 - \delta_s) \norm{x}^2 \leq \norm{Ax}^2 \leq (1 + \delta_s) \norm{x}^2 \numberthis \label{ripeq}
\end{align*}
\end{restatable}

\begin{restatable}{definition}{incoherent}[{\it $\mu$-Incoherence}]\label{incoherent}
A matrix $A \in R^{d \times p}$ with unit norm columns is $\mu$-incoherent if:
\begin{align*}
\abs{\inrprod{A_i}{A_j}} \leq \mu \quad \text{for all } i \neq j \numberthis \label{incoeq}
\end{align*}
\end{restatable}

These two properties are closely related. For example, as a consequence of the Gershgorin circle theorem, $(\delta_s/s)$-incoherent matrices must satisfy $(s, \delta_s)$-RIP. 

Given the prominence of RIP and incoherence conditions in the compressed sensing and sparse coding literature, there has been a large body of work investigating families of matrices that satisfy these conditions. We refer the reader to \citet{randomrip} for an elegant proof that a wide class of random matrices in $\R^{d \times p}$ (i.e. subgaussian) satisfy $(k, \delta)$-RIP with high probability depending on $\delta$, $k$, $p$, and $d$. For an overview of deterministic constructions of such matrices, we refer the reader to \citet{detrip} and the references therein.

\section{Main Results}\label{main}
Having established the necessary background, we now present our main results. % each preceded by the required assumptions on the data-generating process.
Our first result shows that minimizing the population reconstruction loss with a hard-sparsity constraint can lead to learning a dictionary $B$ that is far from the ground truth. We specifically work with the loss defined as:
\begin{align*}
L(B, k) = \E_y \left[\min_{\norm{\hat{z}}_0\leq k} \norm{y - B \hat{z}} ^ 2\right] \numberthis \label{recon2}
\end{align*}
Note that in the definition of $L(B, k)$, we are considering an NP-hard optimization problem (exhaustively searching over all $k$-sparse supports). We could instead replace this exhaustive optimization with an alternative least-squares-based approach (so long as it is at least as good as performing least squares on a uniformly random choice of $k$-sparse support), and our proof techniques for Theorem \ref{overfit} would still work. We consider this version only to simplify the presentation. 

We now show that, under appropriate settings, there exists a dictionary $B$ whose population loss $L(B, k)$ is smaller than that of $A$, while $d_R(A, B)$ is bounded away from $0$ by a term related to the noise in the data-generating process.

\begin{restatable}{assumption}{overfitsetting}\label{as1}
    Let $A \in \R^{d \times p}$ be an arbitrary matrix with unit-norm columns satisfying $(2k, \delta)$-RIP for $k=o\left(\frac{d}{\log p}\right)$ and $\delta = o(1)$, and suppose $\sigma_{\min}^2(A) = \Omega(p/d)$.
    We assume each measurement $y$ is generated as $y \sim Az + \epsilon$, where $z$ is a random vector drawn from an arbitrary probability measure $\Prob_z$ on $k$-sparse vectors in $\R^p$, and $\epsilon \sim \Gauss(0, \sigma^2 \Id_d)$ for some $\sigma > 0$.
\end{restatable}
\begin{restatable}{theorem}{overfit}[Overfitting to Reconstruction Loss]\label{overfit}
Consider the data-generating model in Assumption \ref{as1} and define $\Lambda(z)$ to be:
\begin{align*}
    \Lambda(z) = \inf \{t \ | \ \Prob_z(\norm{z} \geq t) \leq 1/d\}. \numberthis \label{lamz}
\end{align*}
Then for $q = \Omega(p^2 \max(d\sigma^2, \Lambda(z)^2)/\sigma^2)$, there exists a $B \in R^{d \times q}$ such that $L(B, k) \leq L(A, k) - \Omega(k\sigma^2)$ and $d_R(A, B) = \Omega(\sigma^2)$.
\end{restatable}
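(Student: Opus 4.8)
Writing $V_M(y) = \max_{\abs{T}\le k}\norm{\Pi_{M_T}\,y}^2$ for the energy of $y$ captured by the best $k$ columns of a dictionary $M$ (with $\Pi_{M_T}$ the orthogonal projection onto their span), one has $L(M,k) = \E_y[\norm{y}^2 - V_M(y)]$, so the target inequality is equivalent to $\E_y[V_B(y) - V_A(y)] = \Omega(k\sigma^2)$. The plan is to build a $B$ whose every column is a \emph{balanced} two-column combination of $A$ — hence bounded away from every signed column of $A$, forcing $d_R(A,B) = \Omega(1)$, which is $\Omega(\sigma^2)$ in the only regime ($\sigma = O(1)$) in which that claim has content, since $d_R(A,B)\le 1$ always — while $V_B(y) \ge V_A(y) + \Omega(k\sigma^2\log p)$ holds on a high-probability event.

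\textbf{Construction.} Put $\Lambda' = \Theta(\max(\sqrt d\,\sigma,\,\Lambda(z)))$. For each pair $i<j$ let $B$ contain the \emph{noise atoms} $(A_i\pm A_j)/\norm{A_i\pm A_j}$ and, for each dyadic magnitude scale between $\Theta(\sigma)$ and $\Lambda'$, a $\Theta(\sigma)$-net of coefficient pairs $(\alpha,\beta)$ with $\abs{\alpha},\abs{\beta}$ at that scale and $\abs{\alpha/\beta}\le 2$, giving \emph{signal atoms} $(\alpha A_i+\beta A_j)/\norm{\alpha A_i+\beta A_j}$. A geometric-series count bounds the number of columns by $q = \Theta(p^2(\Lambda'/\sigma)^2) = \Theta(p^2\max(d\sigma^2,\Lambda(z)^2)/\sigma^2)$, as required. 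Every column is of the form $(\alpha A_i+\beta A_j)/\norm{\cdot}$ with $\abs{\alpha/\beta}$ at most an absolute constant, so RIP and incoherence give $\abs{\inrprod{A_m}{\alpha A_i+\beta A_j}}\le(1-\Omega(1))\norm{\alpha A_i+\beta A_j}$ for every $m$; hence $d_R(A,B)=\Omega(1)$.

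\textbf{The comparison.} Fix $y = Az+\epsilon$ on the event $\mathcal G = \{\norm{\epsilon}\le 2\sqrt d\,\sigma,\ \norm{z}\le\Lambda(z)\}$ (probability $1-O(1/d)$), and let $T_A^\star$ realize $V_A(y)=\norm{A_{T_A^\star}\hat z}^2$ with $\hat z$ the least-squares coefficients, so $\norm{\hat z}=O(\norm{y})=O(\Lambda')$ by RIP. I would decode with $B$ in two stages. \emph{Signal stage:} discard the entries of $\hat z$ below $\Theta(\sigma)$ (total energy $O(\sigma^2 k)$), bucket the rest into $O(\log(\Lambda'/\sigma))$ dyadic bands, pair entries within each band (ratios $\le 2$, so each pair is matched by a signal atom of that band up to error $O(\sigma^2)$), and reconstruct the $O(\log(\Lambda'/\sigma))$ leftover entries $\hat z_m A_m$ exactly via two atoms $(A_m\pm cA_{m'})/\norm{\cdot}$; this spends $\lceil k/2\rceil + O(\log(\Lambda'/\sigma))$ atoms and produces $\widehat y$ with $\widehat y - A_{T_A^\star}\hat z$ of norm $O(\sigma\sqrt k)$ and lying (up to cancellations) in $\mathrm{span}(A_{T_A^\star})$. \emph{Noise stage:} with the remaining $\gtrsim k/2$ atoms take noise atoms $(A_l+sA_{l'})/\norm{\cdot}$ on disjoint index-pairs avoiding $T_A^\star\cup\supp(z)$ (at most $2k\ll p$ indices), chosen to absorb as much of the residual $y-\widehat y$ as possible. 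By incoherence the span of these fresh atoms is nearly orthogonal to $\mathrm{span}(A_{T_A^\star})$ and to $Az$, so their capture is essentially additive on top of $\norm{\widehat y}^2 \ge V_A(y) - O(\sigma^2 k)$, with cross terms from the $O(\sigma\sqrt k)$ signal error being lower order; and a single noise atom absorbs essentially $\inrprod{A_l}{\epsilon}^2+\inrprod{A_{l'}}{\epsilon}^2$ when (as for the dominant choices) these nearly-Gaussian, variance-$\sigma^2$ correlations are comparable, so $\gtrsim k/2$ disjoint noise atoms — formed by greedily pairing the largest $\abs{\inrprod{A_i}{\epsilon}}$, which is optimal by convexity — extract $\gtrsim 2k\sigma^2\log p$. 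Since $A$ must spend roughly twice as many atoms per significant signal coordinate as $B$, its only countervailing advantage (extra incidental noise picked up by its signal columns) is $O(k\sigma^2)$, yielding $V_B(y)\ge V_A(y) + 2k\sigma^2\log p - O(k\sigma^2) = V_A(y)+\Omega(k\sigma^2\log p)$ on $\mathcal G$.

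\textbf{Loose ends and the crux.} On $\mathcal G^c$ (probability $O(1/d)$ by the definition of $\Lambda(z)$) I would use $V_A(y)-V_B(y)\le V_A(y)\le\norm{y}^2$ and show $\E[(V_A-V_B)\mathbf 1_{\mathcal G^c}] = o(k\sigma^2)$ from $\E[\norm{\epsilon}^2\mathbf 1_{\mathcal G^c}]\le\sigma^2$ together with a mild integrability property of $\Prob_z$ (or bounded support, as is standard in this literature). I expect the crux to lie in the noise stage: one must (i) show the combinatorial maximum of $\sum_l(\abs{\inrprod{A_{l_1}}{\epsilon}}+\abs{\inrprod{A_{l_2}}{\epsilon}})^2/2$ over disjoint index-pairs is $\gtrsim 2k\sigma^2\log p$, via order statistics of the correlated Gaussians $\inrprod{A_i}{\epsilon}$ with RIP controlling conditioning and incoherence controlling orthogonality to the already-used subspace and to $Az$, and (ii) verify that the $O(\sigma\sqrt k)$ signal-reconstruction error does not leak into this capture — the step where near-orthogonality from incoherence is load-bearing and the bookkeeping most delicate (the residual cross term between $A_{T_A^\star}\hat z$ and the reconstruction error, of size $\lesssim\sigma\sqrt k\,\Lambda'$, must be absorbed, e.g. by randomizing the net offset or by the logarithmic slack in the gain; this is smooth away from the regime where $k$ is very small relative to $d$). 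The dyadic bucketing and its $O(\log(\Lambda'/\sigma))$ leftover atoms must likewise be checked to remain lower order within the stated parameter range.
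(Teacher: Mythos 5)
Your construction of $B$ is essentially the paper's: a $\Theta(\sigma)$-scale net over the bounded $2$-sparse combinations of columns of $A$, with $q=\Theta(p^2\max(d\sigma^2,\Lambda(z)^2)/\sigma^2)$ columns, so that $k$ atoms of $B$ emulate $2k$ atoms of $A$; and your route to $d_R(A,B)=\Omega(1)\ge\Omega(\sigma^2)$ via balanced atoms is a legitimate (arguably cleaner) variant. Where you genuinely diverge is in quantifying the loss gap, and this is where the proposal has real holes. The paper never reasons about order statistics of $\inrprod{A_i}{\epsilon}$ at all: it lower-bounds $L(A,k)-L(A,2k)$ by $\E\norm{A\tilde z}^2$, where $\tilde z$ is the best \emph{additional} $k$-sparse fit of the residual $y-A\hat z^*$, and then chains RIP, an averaging argument over uniformly random size-$k$ supports, and the assumption $\sigma_{\min}^2(A)=\Omega(p/d)$ to get $\E\norm{A\tilde z}^2\gtrsim\frac{k}{p}\sigma_{\min}^2(A)\,\E\norm{y-A\hat z^*}^2=\Omega(k\sigma^2)$, with the residual energy $\E\norm{y-A\hat z^*}^2=\Omega(d\sigma^2)$ following because a $2k$-sparse fit can remove only $O(k\sigma^2\log p)$ of the noise energy. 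Notice that you never invoke the $\sigma_{\min}$ assumption; it is load-bearing in the paper's argument, and its absence from your plan is a signal that your replacement (a lower bound on the combinatorial maximum of paired order statistics of correlated Gaussians, plus near-additivity of the captures) is carrying weight it has not earned — that lemma is exactly the hard part and is left unproven.

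Two concrete failure points. First, the cross term you flag is not a loose end but a breakdown: replacing $A_{T_A^\star}\hat z$ by a vector at distance $O(\sigma\sqrt k)$ changes the captured energy by up to $2\norm{y-A_{T_A^\star}\hat z}\cdot O(\sigma\sqrt k)=O(\sigma^2\sqrt{dk})$, which dominates your claimed $\Omega(k\sigma^2\log p)$ gain throughout the regime $k=o(d/\log^2 p)$ that the theorem covers; ``randomizing the net offset'' does not obviously make this error orthogonal to the residual, and the logarithmic slack is far too small to absorb a $\sqrt{d/k}$ factor. The paper sidesteps this by doing the entire gain computation with exact columns of $A$ (where the residual is orthogonal to $A\tilde z$ by the normal equations, so no cross term appears) and paying the net error only once at the very end. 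Second, the balanced-atom restriction costs you two atoms per unpaired dyadic-band leftover, i.e.\ an overhead of $O(\log(\Lambda'/\sigma))$ atoms in the signal stage; for $k=O(\log(\Lambda'/\sigma))$ (and in particular for constant $k$, which satisfies $k=o(d/\log p)$) the signal stage alone exhausts or exceeds the budget and the noise stage never happens. The paper's atoms carry arbitrary, unbalanced coefficient pairs, so any $2k$-sparse combination compresses to exactly $k$ atoms with no leftovers, and the $d_R$ lower bound is instead obtained by perturbing the net points away from $\pm A_i$. To repair your argument you would need either to admit unbalanced atoms (and recover $d_R$ the paper's way) or to restrict the theorem to $k\gg\log(\Lambda'/\sigma)$, and in either case you still need a correct treatment of the net-approximation cross term.
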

\textbf{Proof Sketch.} The key idea is to first determine how much the loss can be decreased by expanding from $k$-sparse combinations of the columns of $A$ to $2k$-sparse combinations, i.e., lower bound the gap between $L(A, k)$ and $L(A, 2k)$. After this, we can construct a dictionary $B$ whose columns form an $\epsilon$-net (with $\epsilon = \sigma^2$) for all $2$-sparse combinations of columns of $A$. Any $2k$-sparse combination of columns in $A$ can then be approximated as a $k$-sparse combination of columns in $B$, which is sufficient for proving the theorem.
% with norm upper bounded by $\Lambda(z)$
%\yc{It might help the reader if we could expand this proof sketch. An average reader may not get much from this sketch. We could write down the three key inequalities here (which together imply $L(B, k) \leq L(A, k) - \Omega(k\sigma^2)$) and defer their proofs. Also why do we have $d_R(A, B) = \Omega(\sigma^2)$?}

\begin{remark}
Before we discuss the implications of Theorem \ref{overfit}, we first verify that Assumption~\ref{as1} is not vacuous, and in fact applies to many matrices of interest. This follows from a result of \citet{rudelson}, which shows that after appropriate rescaling, rectangular matrices with i.i.d. subgaussian entries satisfy the singular value condition in Assumption \ref{as1}. Furthermore, such matrices will also satisfy the RIP condition so long as $k$ is not too large relative to $d$ and $p$, as per \citet{randomrip} as discussed in the previous section.
\end{remark}

Theorem \ref{overfit} shows that learning an appropriately over-realized dictionary fails to recover the ground truth \textit{independent} of the distribution of $z$. %, with the distance to ground truth $d_R(A, B)$ bounded below by the variance of the added noise $\sigma^2$ in the data model.
This means that even if we let the norm of the signal $Az$ in the data-generating process be arbitrarily large, with sufficient over-realization we may still fail to recover the ground-truth dictionary $A$ by minimizing $L(B, k)$.

We also observe that the amount of over-realization necessary in Theorem \ref{overfit} depends on how well $z \sim \Prob_z$ can be bounded with reasonably high probability. If $z$ is almost surely bounded (as is frequently assumed), we can obtain the following cleaner corollary of Theorem \ref{overfit}.

\begin{restatable}{corollary}{overfit2}\label{overfit2}
Consider the same settings as Theorem \ref{overfit} with the added stipulation that $\Prob_z(\norm{z} \leq C) = 1$ for a universal constant $C$. Then for $q = \Omega(p^2 d)$, there exists a $B \in R^{d \times q}$ such that $d_R(A, B) = \Omega(\sigma^2)$ and $L(B, k) \leq L(A, k) - \Omega(k\sigma^2)$.
\end{restatable}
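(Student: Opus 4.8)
The plan is to obtain Corollary \ref{overfit2} as an immediate specialization of Theorem \ref{overfit}: the boundedness assumption $\Prob_z(\norm{z} \le C) = 1$ enters the statement of Theorem \ref{overfit} only through the quantity $\Lambda(z)$, so the whole task reduces to bounding $\Lambda(z)$ and then simplifying the over-realization requirement on $q$.

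First I would bound $\Lambda(z)$. By definition $\Lambda(z) = \inf\{t \mid \Prob_z(\norm{z} \ge t) \le 1/d\}$, and since $\Prob_z(\norm{z} \le C) = 1$ we have $\Prob_z(\norm{z} \ge C + \eta) = 0 \le 1/d$ for every $\eta > 0$. Thus every $t > C$ lies in the set defining the infimum, so $\Lambda(z) \le C$ and hence $\Lambda(z)^2 \le C^2 = O(1)$.

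Next I would substitute this into the condition $q = \Omega\!\big(p^2 \max(d\sigma^2, \Lambda(z)^2)/\sigma^2\big)$ supplied by Theorem \ref{overfit}. Using $\Lambda(z)^2 \le C^2$ gives $\max(d\sigma^2, \Lambda(z)^2)/\sigma^2 \le \max(d, C^2/\sigma^2)$, and in the regime $\sigma^2 = \Omega(1/d)$ (the only regime of interest, since otherwise the per-coordinate noise is vanishingly small) the first term dominates, so it suffices to take $q = \Omega(p^2 d)$. With this choice of $q$, Theorem \ref{overfit} directly produces a $B \in \R^{d \times q}$ with $L(B, k) \le L(A, k) - \Omega(k\sigma^2)$ and $d_R(A, B) = \Omega(\sigma^2)$, which is exactly the claim.

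Because the corollary is a pure specialization, there is no genuinely hard step here — all the real work lives in Theorem \ref{overfit}. The only bookkeeping I would be careful about is: (i) if the stated $q$ must exceed the number of columns the net construction of Theorem \ref{overfit} actually uses, one pads the extra columns by duplicating existing ones (or by appending the columns of $A$ themselves), which can only decrease $L(B,k)$ and leaves $d_R(A,B)$ unchanged; and (ii) the collapse of the $\max$ relies on the mild implicit assumption $\sigma = \Omega(1/\sqrt{d})$, without which the honest requirement is $q = \Omega(p^2 \max(d, C^2/\sigma^2))$.
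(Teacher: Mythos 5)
Your proposal is correct and matches the paper's (implicit) treatment: the corollary is stated as an immediate specialization of Theorem \ref{overfit}, obtained exactly as you do by noting $\Lambda(z) \leq C$ under the almost-sure bound and substituting into the requirement on $q$. Your side remark that collapsing the $\max$ to $p^2 d$ tacitly assumes $\sigma^2 = \Omega(1/d)$ is a fair and accurate observation about the regime in which the stated $q = \Omega(p^2 d)$ suffices.
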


The reason that we can obtain a smaller population loss than the ground truth in Theorem \ref{overfit} is because we can make use of the extra capacity in $B$ to overfit the noise $\epsilon$ in the data-generating process. To prevent this, our key idea is to perform the decoding step $B\hat{z} = y$ on a subset of the dimensions of $y$ - which we refer to as the \textit{unmasked} part of $y$ - and then evaluate the loss of $B$ using the complement of that subset (the \textit{masked} part of $y$). Intuitively, because each coordinate of the noise $\epsilon$ is independent, a dictionary $B$ that well-approximates the noise in the unmasked part of $y$ will have no benefit in approximating the noise in the masked part of $y$.
% In order to prevent the overfitting \yc{Section 3 has not discussed overfitting until this point, so this is a bit unexpected. Perhaps say something like ``The reason that $B$ obtained a smaller population loss in the proof of Theorem~\ref{overfit} is because, instead of learning $A$, $B$ used the over-realized columns to better fit the noise $\eps$ in $y = Az + \eps$''} that occurs in Theorem \ref{overfit}, we need to modify $L(B, k)$ such that we cannot significantly decrease the loss by adding columns to our learned dictionary that better approximate the noise in the data model.

% Our key idea is to exploit the inner minimization in Equation \eqref{recon2}.
% \yc{Say something straightforward about hiding half of the data, so $B$ cannot fit the noise.}
% We consider fitting $z$ on a subset $M \subset [d]$ of coordinates of $y$.
% We refer to $M$ as the \textit{unmasked} portion of $y$ and the remaining coordinates $([d] \setminus M)$ the \textit{masked} portion of $y$.
% After fitting $z$ on the unmask entries of $y$, we then evaluate the difference $\norm{y - Bz}^2$ only on the masked entries of $y$. Intuitively, because each coordinate of the noise $\epsilon$ is independent, a dictionary $B$ that well-approximates the noise in the unmasked portion of $y$ will have no benefit in approximating the noise in the masked portion of $y$.
% \yc{``This will effectively prevent the learned dictionary $B$ to overfit to the noise.''}
We can formalize this as the following masking objective:
\begin{align*}
    L_{mask}(B, k, M) &= \E_y \left[\norm{[y]_{[d] \setminus M} - [B\hat{z}]_{[d] \setminus M}}^2\right] \numberthis \label{LMloss} \\
    \text{where } \hat{z}([y]_M) &= g_{\textnormal{OMP}}([y]_M, P_M B, k) \numberthis \label{LMlossinner}
\end{align*}

In defining $L_{mask}$, we have opted to use $g_{\textnormal{OMP}}$ in the inner minimization step, as opposed to the exhaustive $\argmin$ in the definition of $L$. Similar to the discussion earlier, we could have instead used any other approach based on least squares to decode $\hat z$ (including the exhaustive approach), so long as we have guarantees on the probability of failing to recover the true code $z$ given access to the ground-truth dictionary $A$. This choice of using OMP is mostly to tie our theory more closely with our experiments.

% \begin{restatable}{theorem}{masking}[Benefits of Masking]\label{masking}
% Under the same settings as Theorem \ref{overfit} and considering a non-empty mask $M \subset [d]$ satisfying the property that $P_M A$ has rank at least $k$, we have that $A$ is minimax for $L_{mask}$ in the following sense:
% \begin{align*}
%     \sup_{\Prob_z} L_{mask}(A, k, M) = \inf_{B} \sup_{\Prob_z} L_{mask}(B, k, M) \numberthis \label{Aminimax}
% \end{align*}
% Where the supremum is over all probability measures $\Prob_z$ satisfying the constraints of Assumption \ref{as1}. 
% \end{restatable}
Now we present our second main result which shows, in contrast to Theorem \ref{overfit}, that optimizing $L_{mask}$ prevents overfitting noise (albeit in a different but closely related setting).
\begin{restatable}{assumption}{masksetting}\label{as2}
    Let $A \in \R^{d \times p}$ be an arbitrary matrix such that there exists an $M \subset [d]$ with $P_M A$ being $\mu$-incoherent with $\mu \leq C/(2k - 1)$ for a universal constant $C < 1$.
    We assume each measurement $y$ is generated as $y \sim Az + \epsilon$, where $[z]_{\supp(z)} \sim \Gauss(0, \sigma_z^2 \Id_k)$ with $\supp(z)$ drawn from an arbitrary probability distribution over all size-$k$ subsets of $[d]$, and $\epsilon \sim \Gauss(0, \sigma^2 \Id_d)$ for some $\sigma > 0$.
\end{restatable}
\begin{restatable}{theorem}{masking}[Benefits of Masking]\label{masking}
    Consider the data-generating model in Assumption \ref{as2}.
    For any non-empty mask $M \subset [d]$ such that $P_M A$ satisfies the $\mu$-incoherence condition in the assumption, we have
    \begin{align*}
        \lim_{\sigma_z \to \infty} \left(L_{mask}(A, k, M) - \min_{B} L_{mask}(B, k, M)\right)  = 0 \numberthis \label{Abest}
    \end{align*}
    That is, as the expected norm of the signal $Az$ increases, there exist minimizers $B$ of $L_{mask}$ such that $d_R(A, B) \to 0$.
\end{restatable}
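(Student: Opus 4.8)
The plan is to show that as $\sigma_z \to \infty$, the ground-truth dictionary $A$ becomes (essentially) an optimal choice for $L_{mask}$, which amounts to two things: (i) the OMP decoding step on the unmasked part $[y]_M$ using $P_M A$ recovers the correct support of $z$ with probability tending to $1$ as $\sigma_z \to \infty$; and (ii) conditioned on correct support recovery, the masked reconstruction error $\norm{[y]_{[d]\setminus M} - [A\hat z]_{[d]\setminus M}}^2$ is the irreducible noise term $\norm{[\epsilon]_{[d]\setminus M}}^2$ plus a lower-order correction, and no other $B$ can beat this up to vanishing terms.

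First I would handle the decoding step. Since $P_M A$ is $\mu$-incoherent with $\mu \le C/(2k-1)$, it satisfies a $(2k,\delta)$-RIP-type / exact-recovery condition for OMP: the classical OMP analysis (e.g. Tropp) guarantees that OMP run on $[y]_M = P_M A z + [\epsilon]_M$ recovers $\supp(z)$ exactly provided the signal-to-noise ratio on $M$ is large enough — concretely, provided the smallest nonzero entry of $z$ (in the appropriate rescaled sense) dominates $\norm{[\epsilon]_M}$. Because $[z]_{\supp(z)} \sim \Gauss(0,\sigma_z^2 I_k)$ while $\epsilon$ has fixed variance $\sigma^2$, as $\sigma_z \to \infty$ the event that OMP recovers $\supp(z)$ (and hence $\hat z$ is the least-squares solution on the true support) has probability $\to 1$; the failure event contributes a bounded loss times a vanishing probability, so it is negligible in the limit. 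This is the step I expect to be the main obstacle: I need a clean, quantitative OMP success bound that is uniform over the randomness of $\supp(z)$ and $\epsilon$, and I need to control the contribution of the (rare) failure event to the expectation $L_{mask}(A,k,M)$ — this requires a moment/tail bound on $\norm{[A\hat z]_{[d]\setminus M}}^2$ even when decoding fails, which needs care since $\hat z$ can be large when $\sigma_z$ is large.

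Next, conditioned on correct support recovery, $\hat z$ is the least-squares fit of $[y]_M$ using the columns of $P_M A$ indexed by $S := \supp(z)$, so $[y]_M = (P_M A)_S z_S + [\epsilon]_M$ and $\hat z_S = z_S + ((P_M A)_S)^{+}[\epsilon]_M$, giving $[A\hat z]_{[d]\setminus M} = (P_{[d]\setminus M} A)_S z_S + (P_{[d]\setminus M}A)_S ((P_M A)_S)^{+}[\epsilon]_M$. Hence $[y]_{[d]\setminus M} - [A\hat z]_{[d]\setminus M} = [\epsilon]_{[d]\setminus M} - (P_{[d]\setminus M}A)_S((P_M A)_S)^{+}[\epsilon]_M$, whose expected squared norm is a fixed $O(\sigma^2)$ quantity \emph{independent of $\sigma_z$}. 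So $\lim_{\sigma_z\to\infty} L_{mask}(A,k,M)$ equals this finite constant $c_A$. For the lower bound on $\min_B L_{mask}(B,k,M)$, the cleanest route is to argue that any $B$ must, in the limit, also incur at least $\E\norm{[\epsilon]_{[d]\setminus M}}^2$ minus whatever can be explained by $[\epsilon]_M$ — and then show that the best linear predictor of $[\epsilon]_{[d]\setminus M}$ from $[\epsilon]_M$ (the only information the decoder sees about the masked noise, since $[\epsilon]_{[d]\setminus M}$ is independent of $z$ and of $[\epsilon]_M$) is zero by independence of coordinates of $\epsilon$; more carefully, one shows that as $\sigma_z\to\infty$ the decoded $\hat z$ for any fixed $B$ is dominated by the signal term, so $[B\hat z]_{[d]\setminus M}$ converges (in the relevant sense) to a function that can only match the signal part $(P_{[d]\setminus M}A)_S z_S$, not the independent masked noise, forcing $\liminf_{\sigma_z\to\infty}\min_B L_{mask}(B,k,M) \ge c_A$. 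Actually, the honest and sufficient strategy is weaker and easier: it suffices to produce, for each $\sigma_z$, a sequence of $B$'s (e.g. $B = A$ itself, or $A$ with extra columns) achieving $L_{mask} \to c_A$, and separately to show $\min_B L_{mask}(B,k,M) \ge c_A - o(1)$; the latter follows from the observation that the masked coordinates of $\epsilon$ are independent of everything the decoder observes, so in expectation $\E\norm{[\epsilon]_{[d]\setminus M}}^2$ is unavoidable in the $\sigma_z\to\infty$ limit while the cross terms vanish. Combining, the gap in \eqref{Abest} is $o(1)$, and tracking the construction of the near-optimal $B$ (taking $B$ whose columns approximate the relevant subspace exactly) shows $d_R(A,B)\to 0$, completing the argument.
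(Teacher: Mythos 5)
Your overall architecture (upper-bound $L_{mask}(A,k,M)$ by showing OMP recovers the support w.h.p. as $\sigma_z\to\infty$; lower-bound $\min_B L_{mask}(B,k,M)$ by an information/independence argument) matches the paper's, and your identification of the delicate point in the upper bound — controlling $\E\norm{[A\hat z]_{[d]\setminus M}}^2$ on the OMP failure event when $\hat z$ can be large — is exactly where the paper spends its effort (via Theorem 9 of Cai–Wang, splitting coordinates of $z$ by whether they exceed the threshold $\gamma$). However, your lower bound has a genuine gap. You correctly compute that
\begin{align*}
\lim_{\sigma_z\to\infty} L_{mask}(A,k,M) \;=\; c_A \;=\; (d-\abs{M})\sigma^2 \;+\; \E\bigl[\norm{(P_{[d]\setminus M}A)_{S}\,((P_M A)_{S})^{+}[\epsilon]_M}^2\bigr],
\end{align*}
where the second term is the unmasked noise propagated through the support-restricted least-squares map — a fixed positive constant of order $k\sigma^2$, independent of $\sigma_z$. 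But the justification you give for $\min_B L_{mask}(B,k,M)\ge c_A - o(1)$ — that $[\epsilon]_{[d]\setminus M}$ is independent of everything the decoder sees, so $\E\norm{[\epsilon]_{[d]\setminus M}}^2$ is unavoidable — only yields $\min_B L_{mask}(B,k,M)\ge (d-\abs{M})\sigma^2$. That leaves a non-vanishing deficit equal to the propagated-noise term, so your two bounds do not meet and the claimed limit \eqref{Abest} does not follow.

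What is missing is an argument that no dictionary $B$ (with any decoder depending only on $[y]_M$) can estimate the \emph{signal} $P_{[d]\setminus M}Az$ from the noisy observation $[y]_M = P_M Az + [\epsilon]_M$ with risk below this propagated-noise floor. The paper supplies this by lower-bounding $\inf_{\hat y}\E\norm{P_{[d]\setminus M}Az-\hat y}^2$ over all estimators $\hat y([y]_M)$ by the Bayes risk of an estimator that is additionally given the true support $\supp(z)$; since $[z]_{\supp(z)}$ is Gaussian, this Bayes estimator is explicitly the ridge-regression estimator, whose risk converges to the support-oracle least-squares risk as $\sigma_z\to\infty$ — which is precisely the second term of $c_A$. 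To repair your proof you need to insert this oracle/Bayes-risk step (or an equivalent minimax argument) between your independence observation and the conclusion; the independence of the masked noise alone is not enough.
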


\textbf{Proof Sketch.} The proof proceeds by expanding out $L_{mask}(B, k, M)$ and using the fact that $[B\hat{z}^*]_{[d] \setminus M}$ is independent of $[\epsilon]_{[d] \setminus M}$ to obtain a quantity that closely resembles the prediction risk considered in analyses of linear regression. From there we show that the Bayes risk is lower bounded by the risk of a regularized least squares solution with access to a support oracle. We then rely on a result of \citet{omperror} to show that $g_{\textnormal{OMP}}([y]_M)$ recovers the support of $z$ with increasing probability as $\sigma_z \to \infty$, and hence its risk converges to the aforementioned prediction risk.

\begin{remark}
As before, so long as the mask $M$ is not too small (i.e. $\Omega(k \polylog(p))$), matrices with i.i.d. subgaussian entries will satisfy the assumptions on $A$ in Assumption \ref{as2}. In particular, the set of ground truth dictionaries satisfying Assumptions \ref{as1} \textit{and} \ref{as2} is non-trivial, once again by results in \citet{rudelson}.
\end{remark}

Comparing Theorem \ref{masking} to Theorem \ref{overfit}, we see that approximate minimizers of $L_{mask}$ can achieve arbitrarily small recovery error, so long as the signal $Az$ is large enough; whereas for $L$, there always exist minimizers whose recovery error is bounded away from $0$. We note that having the expected norm of the signal be large is effectively necessary to hope for recovering the ground truth in our setting, as in the presence of Gaussian noise there is always some non-zero probability that the decoding step can fail. Full proofs of Theorems \ref{overfit} and \ref{masking} can be found in Section \ref{fullproofs} of the Appendix.

\section{Experiments}\label{experiments}
In this section, we examine whether the separation between the performance of sparse coding with or without masking (demonstrated by Theorems \ref{overfit} and \ref{masking}) manifests in practice. Code for the experiments in this section can be found at: \url{https://github.com/2014mchidamb/masked-sparse-coding-icml}.

For our experiments, we need to make a few concessions from the theoretical settings introduced in Sections \ref{sparsebgrnd} and \ref{setting}. Firstly, we cannot directly optimize the expectations in $L$ and $L_{mask}$ as defined in Equations \eqref{recon2} and \eqref{LMloss}, so we instead optimize the corresponding empirical versions defined in the same vein as Equation \eqref{emprecon1}. Another issue is that the standard objective $L$ requires solving the optimization problem $\min_{\norm{\hat{z}}_0 \leq k} \norm{y - B\hat{z}}^2$, which is NP-hard in general. In order to experiment with reasonably large values of $d, p, \text{ and } p'$ and to be consistent with the decoding step in $L_{mask}$, we thus approximately solve the aforementioned optimization problem using OMP. 

\begin{algorithm}[ht]
   \caption{Algorithm for Optimizing $L$}
\begin{algorithmic}\label{lalgo}
   \STATE {\bfseries Input:} Data $\{y_1, ..., y_T\}$, Dictionary $B^{(0)} \in \R^{d \times p'}$, Learning Rate $\eta \in \R^+$
   \FOR{$t=0$ {\bfseries to} $T-1$}
   \STATE $z \gets g_{\textnormal{OMP}}(y_{t + 1}, B^{(t)})$
   \STATE $B^{(t + 1)} \gets B^{(t)} - \eta \grad_{B^{(t)}} \norm{y_{t + 1} - B^{(t)}z}^2$
   \STATE $B^{(t + 1)} \gets \mathrm{Proj}_{\mathbb{S}^{d - 1}} B^{(t + 1)}$
   \ENDFOR
\end{algorithmic}
\end{algorithm}

\begin{algorithm}[ht]
   \caption{Algorithm for Optimizing $L_{mask}$}
\begin{algorithmic}\label{maskalgo}
   \STATE {\bfseries Input:} Data $\{y_1, ..., y_T\}$, Dictionary $B^{(0)} \in \R^{d \times p'}$, Learning Rate $\eta \in \R^+$, Mask Size $m \in [d]$
   \FOR{$t=0$ {\bfseries to} $T-1$}
   \STATE $M \gets \text{Uniformly random subset of size $m$ from } [d]$
   \STATE $z \gets g_{\textnormal{OMP}}([y_{t + 1}]_M, B^{(t)})$
   \STATE $B^{(t + 1)} \gets B^{(t)} - \eta \grad_{B^{(t)}} \norm{[y_{t + 1}]_{M^c} - [B^{(t)} z]_{M^c}}^2$
   \STATE $B^{(t + 1)} \gets \mathrm{Proj}_{\mathbb{S}^{d - 1}} B^{(t + 1)}$
   \ENDFOR
\end{algorithmic}
\end{algorithm}

The approaches for optimizing $L$ and $L_{mask}$ given $n$ samples from the data-generating process are laid out in Algorithms \ref{lalgo} and \ref{maskalgo}, in which we use $\mathrm{Proj}_{\mathbb{S}^{d - 1}} B$ to denote the result of normalizing all of the columns of $B$. % (this projection step is necessary for the version of OMP we rely on). 
We also use $M^c$ as a shorthand in Algorithm \ref{maskalgo} to denote $[d] \setminus M$.
 
We point out that Algorithm \ref{maskalgo} introduces some features that were not present in the theory of the masking objective; namely, in each iteration, we randomly sample a new mask of a pre-fixed size. %This is attributable to the fact that we rely on gradient descent for optimization.
This is because if we were to run gradient descent using a single, fixed mask $M$ at each iteration, as we don't differentiate through the OMP steps, the gradient with respect to $B^{(t)}$ computed on the error $\norm{[y]_{[d] \setminus M} - [Bz]_{[d] \setminus M}}^2$ would be non-zero only for those rows of $B$ corresponding to the indices $[d] \setminus M$. To avoid this issue, we sample new masks in each iteration so that each entry of $B$ can be updated. There are alternative approaches that can achieve similar results; i.e. deterministically cycling through different masks, but they have similar performance. %We found in preliminary experiments that these strategies for choosing different masks have similar performance, so we choose random sampling for simplicity.

\begin{figure*}[htp]
\centering
\subfigure[Sample init, $p'$ scaling]{\includegraphics[scale=0.24]{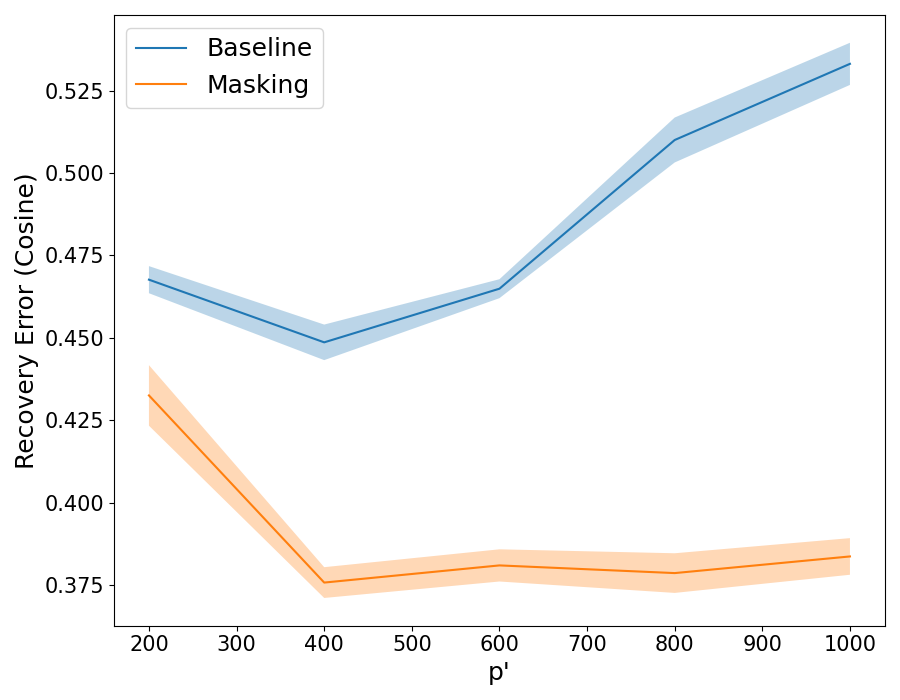}}
\subfigure[Sample init, $d, p, k, p'$ scaling]{\includegraphics[scale=0.24]{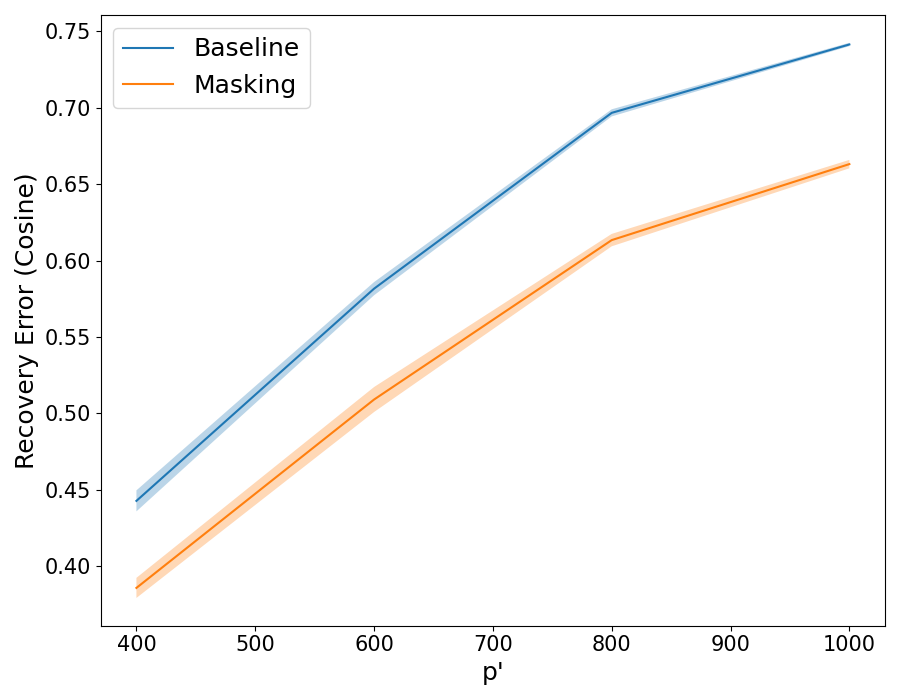}}
\subfigure[Sample init, noise scaling]{\includegraphics[scale=0.24]{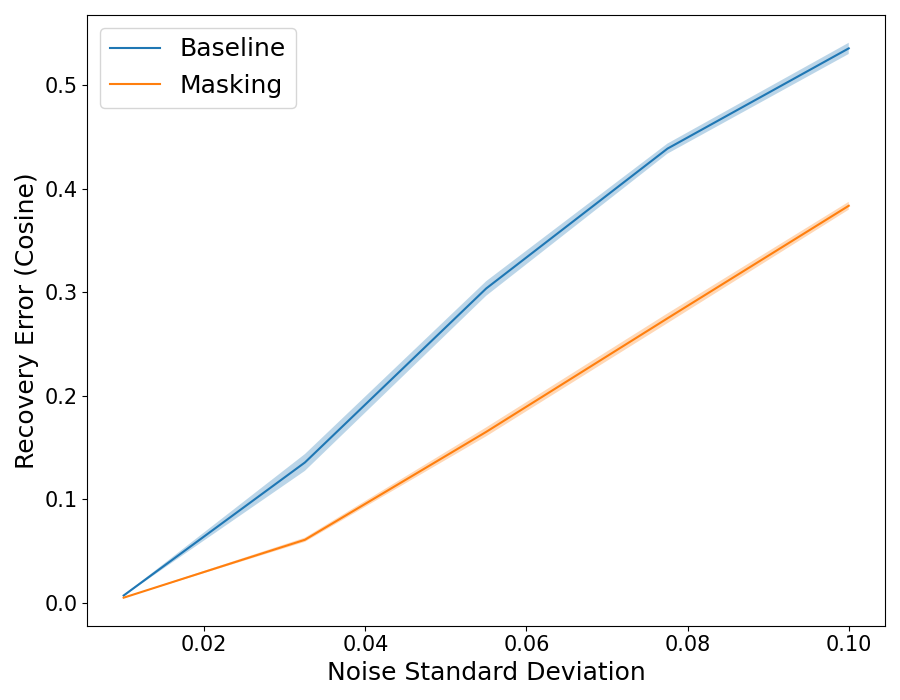}}
%\qquad \qquad
\subfigure[Local init, $p'$ scaling]{\includegraphics[scale=0.24]{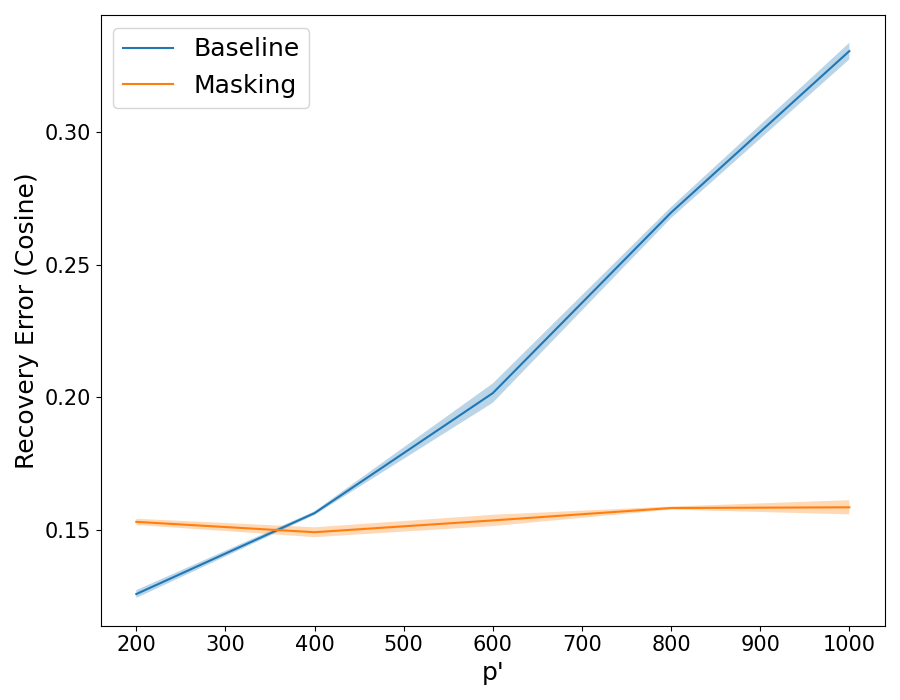}}
\subfigure[Local init, $d, p, k, p'$ scaling]{\includegraphics[scale=0.24]{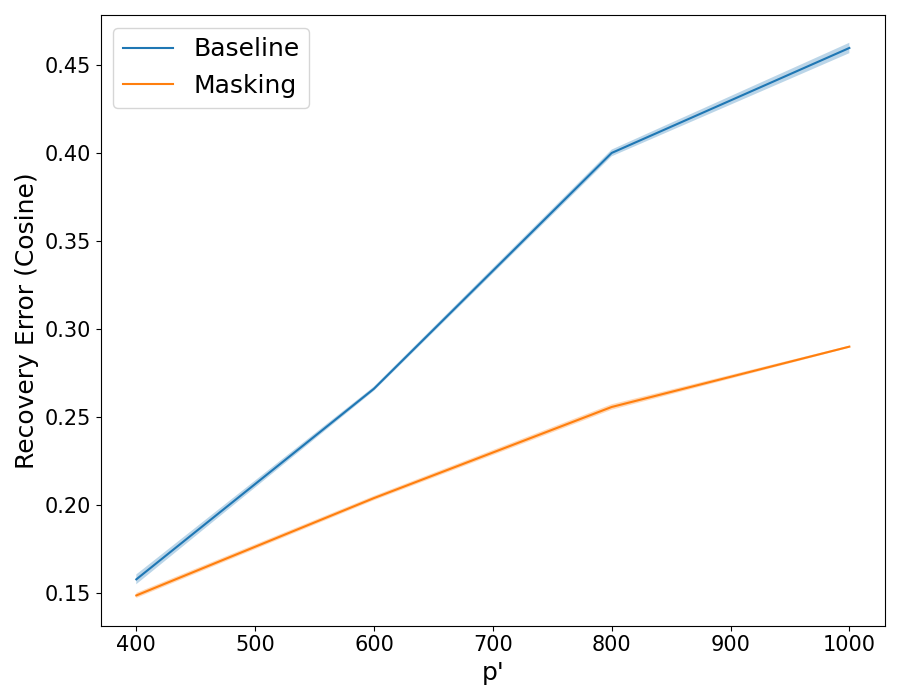}}
\subfigure[Local init, noise scaling]{\includegraphics[scale=0.24]{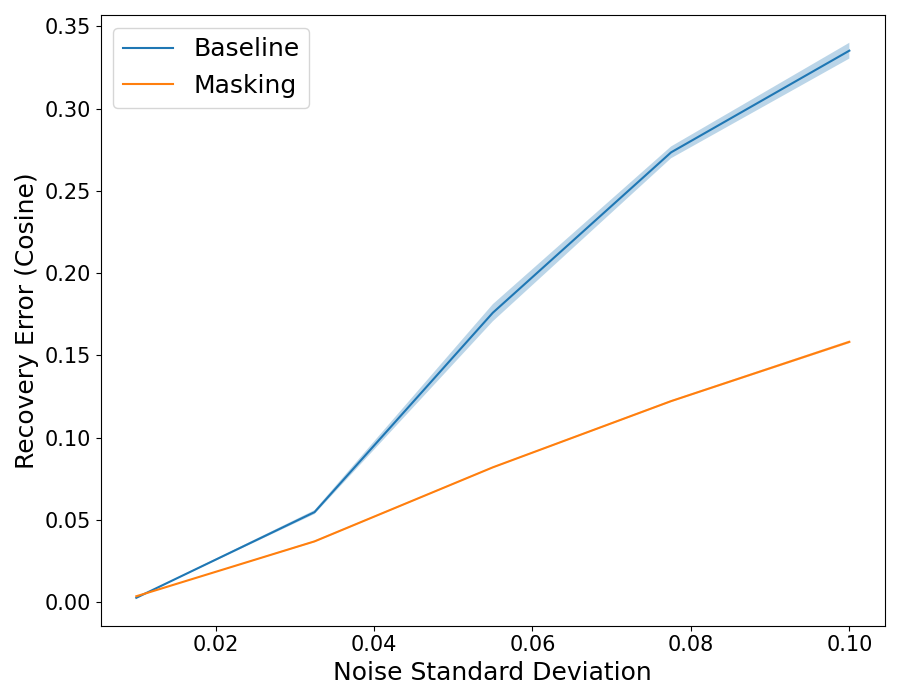}}
\caption{Comparison of Algorithm \ref{lalgo} (Baseline) and Algorithm \ref{maskalgo} (Masking) under the settings of Sections \ref{scaleover}, \ref{scaleall}, and \ref{scalenoise} (from left to right). Each curve represents the mean of 5 training runs, with the surrounded shaded area representing one standard deviation.}
\label{scaleoverfig}
\end{figure*}

While we will analyze the performance of Algorithms \ref{lalgo} and \ref{maskalgo} across several different experimental setups over the next few subsections, we describe the following facets shared across all setups. We generate a dataset of $n = 1000$ samples $y_i = Az_i + \epsilon_i$, where $A \in \R^{d \times p}$ is a standard Gaussian ensemble with normalized columns, the $z_i$ have uniformly random $k$-sparse supports whose entries are i.i.d. $\Gauss(0, 1)$, and the $\epsilon_i$ are mean zero Gaussian noise with some fixed variance (which we will vary in our experiments). We also normalize the $z_i$ so as to constrain ourselves to the bounded-norm setting of Corollary \ref{overfit2}. In addition to the $1000$ samples constituting the dataset, we also assume access to a held-out set of $p'$ samples from the data-generating process for initializing the dictionary $B^{(0)} \in \R^{d \times p'}$.

For training, we use batch versions of Algorithms \ref{lalgo} and \ref{maskalgo} in which we perform gradient updates with respect to the mean losses computed over $\{y_1, ..., y_B\}$ with $B = 200$ as the batch size. For the actual gradient step, we use Adam \citep{adam} with its default hyperparameters of $\beta_1 = 0.9, \beta_2 = 0.999$ and a learning rate of $\eta = 0.001$, as we found Adam trains \textit{significantly} faster than SGD (and we ran into problems when using large learning rates for SGD). We train for $500$ epochs (passes over the entire dataset) for both Algorithms \ref{lalgo} and \ref{maskalgo}. For Algorithm \ref{maskalgo}, we always use a mask size of $d - \lfloor d/10 \rfloor$, which we selected based off of early experiments. %, motivated by the masking choices made in \citet{devlin-etal-2019-bert}.
We ensured that, even for this fairly large mask size, the gradient norms for both $L$ and $L_{mask}$ were of the same order in our experiments and that 500 epochs were sufficient for training.

We did not perform extensive hyperparameter tuning, but we found that the aforementioned settings performed better than the alternative choices we tested for both algorithms across all experimental setups. % (larger learning rates ran into issues in larger noise setups, hence we opted for a sufficiently long training horizon and a smaller learning rate).
Our implementation is in PyTorch \citep{pytorch}, and all of our experiments were conducted on a single P100 GPU.
%\yc{It would be great to put a Github link in the arXiv/final version of the paper. I don't have an opinion about whether to include code for this submission.}

\subsection{Scaling Over-realization}\label{scaleover}
%\begin{figure*}[htp]
%\centering
%\subfigure[Training from sample initialization]{\includegraphics[scale=0.22]{points_init_d_scaling_sqrt_d_noise/col_recovery_cosine.png}}
%\qquad \qquad
%\subfigure[Training from local initialization]{\includegraphics[scale=0.22]{A_init_d_scaling_sqrt_d_noise/col_recovery_cosine.png}}
%\caption{Comparison of Algorithm \ref{lalgo} (Baseline) and Algorithm \ref{maskalgo} (Masking) under the settings of Section \ref{scaleall}.}
%\label{figscaleall}
%\end{figure*}
%\begin{figure*}[htp]
%\centering
%\subfigure[Training from sample initialization]{\includegraphics[scale=0.22]{points_init_noise_scaling/col_recovery_cosine.png}}
%\qquad \qquad
%\subfigure[Training from local initialization]{\includegraphics[scale=0.22]{A_init_noise_scaling/col_recovery_cosine.png}}
%\caption{Comparison of Algorithm \ref{lalgo} (Baseline) and Algorithm \ref{maskalgo} (Masking) under the settings of Section \ref{scalenoise}.}
%\label{fignoise}
%\end{figure*}
We first explore how the choice of $p'$ for the learned dictionary $B$ affects the empirical performance of Algorithms \ref{lalgo} and \ref{masking} when the other parameters of the problem remain fixed. Theorem \ref{overfit} and Corollary \ref{overfit2} indicate that the performance of Algorithm \ref{lalgo} should suffer as we scale $p'$ relative to $d$ and $p$.

In order to test whether this is actually the case in practice, we consider samples generated as described above with $A \in \R^{d \times p}$ for $d = 100, p = 200, \text{ and } \norm{z}_0 = k = 5$ fixed, while scaling the number of atoms $p'$ in $B$ from $p' = p$ (exactly realized) to $p' = n$ (over-realized and overparameterized). We choose $\epsilon_i \sim \Gauss(0, 1/d)$, which is a high noise regime as the expected norm of the noise $\epsilon_i$ will be comparable to that of the signal $Az_i$. To make it computationally feasible to run several trials of our experiments, we consider the $p'$ values $\{200, 400, 600, 800, 1000\}$ and do not consider more fine-grained interpolation between $p$ and $n$.

For the training process, we consider two different initialization of $B^{(0)}$. In the first case, we initialize $B^{(0)}$ to have columns corresponding to the aforementioned set of held-out $p'$ (normalized) samples from the data-generating process, as this is a standard initialization choice that has been known to work well in practice \citep{arora15}. However, this initialization choice in effect corresponds to a dataset of $n + p'$ samples, and it is fair to ask whether this initialization benefit is worth the sample cost relative to a random initialization. Our initial experiments showed that this was indeed the case, i.e. random initialization with access to $p'$ additional samples did not perform better, so we focus on this sample-based initialization. That being said, we did not find the ordering of the performances of Algorithms \ref{lalgo} and \ref{maskalgo} sensitive to the initializations we considered, only the final absolute performance in terms of $d_R(A, B)$.

In addition to this purely sample-based initialization, we also consider a ``local'' initialization of $B^{(0)}$ to the ground truth $A$ itself concatenated with $p' - p$ normalized samples. This is obviously not intended to be a practical initialization; the goal here is rather to analyze the extent of overfitting to the noise $\epsilon_i$ in the dataset for both algorithms. Namely, we expect that Algorithm \ref{lalgo} will move further away from the ground truth than Algorithm \ref{maskalgo}.

The results for training using these initializations for both algorithms and then computing the final dictionary recovery errors $d_R(A, B)$ are shown in Figure \ref{scaleoverfig}(a, d). We use cosine distance when reporting the error $d_R(A, B)$ since the learned dictionary $B$ also has normalized columns, so Euclidean distance only changes the scale of the error curves and not their shapes.

For both choices of initialization, we observe that Algorithm \ref{maskalgo} outperforms Algorithm \ref{lalgo} as $p'$ increases, with this gap only becoming more prominent for larger $p'$. Furthermore, we find that recovery error actually \textit{worsens} for Algorithm \ref{lalgo} for every choice of $p' > p$ for both initializations in our setting. While this is possibly unsurprising for initializing at the ground truth, it is surprising for the sample-based initialization which does not start at a low recovery error. On the other hand, training using Algorithm \ref{maskalgo} improves the recovery error from initialization when using sample-based initialization for every choice of $p'$ except $p' = n$, which again corresponds to the overparameterized regime in which it is theoretically possible to memorize every sample as an atom of $B$.

Additionally, we also see that the performance of Algorithm \ref{maskalgo} is much less sensitive to the level of over-realization in $B$. When training from local initialization, Algorithm \ref{maskalgo} retains a near-constant level of error/overfitting as we scale $p'$. Similarly, when training from sample initialization, performance does not degrade as we scale $p'$, and in fact improves initially with a modest level of over-realization.

This improvement up to a certain amount of over-realization (in our case $p' = 2p$) is seen even in the performance of Algorithm \ref{lalgo} for sample initialization (although note that while the recovery error is better for $p' = 2p$ compared to $p' = p$, training still makes the error worse than initialization for Algorithm \ref{lalgo}). A similar phenomenon was observed in \citet{overrealized} in the setting where $y_i = Az_i$ (no noise), and we find it interesting that the phenomenon is (seemingly) preserved even in the presence of noise. We do not investigate the optimal level of over-realization any further, but believe it would be a fruitful direction for future work.

\subsection{Scaling All Parameters}\label{scaleall}
The experiments of Section \ref{scaleover} illustrate that for the fixed choices of $d$, $p$, and $k$ that we used, scaling the over-realization of $B$ leads to rapid overfitting in the case of Algorithm \ref{lalgo}, while Algorithm \ref{maskalgo} maintains good performance. To verify that this is not an artifact of the choices of $d, p, \text{ and } k$ that we made, we also explore what happens when over-realization is kept at a fixed ratio to the other setting parameters while they are scaled.

For these experiments, we consider $A \in \R^{d \times p}$ for $d \in \{100, 150, 200, 250\}$ and scale $p$ as $p = 2d$ and $k$ as $k = \lfloor d/20 \rfloor$ to (approximately) preserve the ratio of atoms and sparsity to dimension from the previous subsection. We choose to scale $p'$ as $p' = 2p$ since that was the best-performing setting (for the baseline) from the experiments of Figure \ref{scaleoverfig}. We keep the noise variance at $1/d$ to stay in the relatively high noise regime. 

As before, we consider a sample-based initialization as well as a local initialization near the ground truth dictionary $A$. The results for both Algorithms \ref{lalgo} and \ref{maskalgo} under the described parameter scaling are shown in Figure \ref{scaleoverfig}(b, e). Once again we find that Algorithm \ref{maskalgo} has superior recovery error, with this gap mostly widening as the parameters are scaled. However, unlike the case of fixed $d, p, \text{ and } k$, this time the performance of Algorithm \ref{maskalgo} also degrades with the scaling. This is to be expected, as increasing $p$ leads to more ground truth atoms that need to be recovered well in order to have small $d_R(A, B)$.

\subsection{Analyzing Different Noise Levels}\label{scalenoise}
The performance gaps shown in the plots of Figure \ref{scaleoverfig}(a, b, d, e) are in the high noise regime, and thus it is fair to ask whether (and to what extent) these gaps are preserved at lower noise settings. We thus revisit the settings of Section \ref{scaleover} (choosing $d, p, \text{ and } k$ to be the same) and fix $p' = 1000$ (the maximum over-realization we consider). We then vary the variance of the noise $\epsilon_i$ from $1/d^2$ to $1/d$ linearly, which corresponds to the standard deviations of the noise being $\{0.01, 0.0325, 0.055, 0.0775, 0.1\}$.

Results are shown for the sample-based initialization as well as the local initialization in Figure \ref{scaleoverfig}(c, f). Here we see that when the noise variance is very low, there is virtually no difference in performance between Algorithms \ref{lalgo} and \ref{maskalgo}. Indeed, when the variance is $1/d^2$ we observe that both algorithms are able to near-perfectly recover the ground truth, even from the sample-based initialization.

However, as we scale the noise variance, the gap between the performance of the two algorithms resembles the behavior seen in the experiments of Sections \ref{scaleover} and \ref{scaleall}.

\section{Conclusion}
In summary, we have shown in Sections \ref{main} and \ref{experiments} that applying the standard frameworks for sparse coding to the case of learning over-realized dictionaries can lead to overfitting the noise in the data. In contrast, we have also shown that by carefully separating the data used for the decoding and update steps in Algorithm \ref{genalgo} via masking, it is possible to alleviate this overfitting problem both theoretically and practically. Furthermore, the experiments of Section \ref{scalenoise} demonstrate that these improvements obtained from masking are not at the cost of worse performance in the low noise regime, indicating that a practitioner may possibly use Algorithm \ref{maskalgo} as a drop-in replacement for Algorithm \ref{lalgo} when doing sparse coding.

Our results also raise several questions for exploration in future work. Firstly, in both Theorem \ref{masking} and our experiments we have constrained ourselves to the case of sparse signals that follow Gaussian distributions. It is natural to ask to what extent this is necessary, and whether our results can be extended (both theoretically and empirically) to more general settings (we expect, at the very least, that parts of Assumptions \ref{as1} and \ref{as2} can be relaxed). Additionally, we have focused on sparse coding under hard-sparsity constraints and using orthogonal matching pursuit, and it would be interesting to study whether our ideas can be used in other sparse coding settings.

Beyond these immediate considerations, however, the intent of our work has been to show that there is still likely much to be gained from applying ideas from recent developments in areas such as self-supervised learning to problems of a more classical nature such as sparse coding. Our work has only touched on the use of a single such idea (masking), and we hope that future work looks into how other recently popular ideas can potentially improve older algorithms.  

Finally, we note that this work has been mostly theoretical in nature, and as such do not anticipate any direct misuses or negative impacts of the results.

\section*{Acknowledgements}
Rong Ge, Muthu Chidambaram, and Chenwei Wu are supported by NSF Award DMS-2031849, CCF-1845171 (CAREER), CCF-1934964 (Tripods), and a Sloan Research Fellowship. Yu Cheng is supported in part by NSF Award CCF-2307106.

\bibliography{icml2023}
\bibliographystyle{icml2023}

%%%%%%%%%%%%%%%%%%%%%%%%%%%%%%%%%%%%%%%%%%%%%%%%%%%%%%%%%%%%%%%%%%%%%%%%%%%%%%%
%%%%%%%%%%%%%%%%%%%%%%%%%%%%%%%%%%%%%%%%%%%%%%%%%%%%%%%%%%%%%%%%%%%%%%%%%%%%%%%
% APPENDIX
%%%%%%%%%%%%%%%%%%%%%%%%%%%%%%%%%%%%%%%%%%%%%%%%%%%%%%%%%%%%%%%%%%%%%%%%%%%%%%%
%%%%%%%%%%%%%%%%%%%%%%%%%%%%%%%%%%%%%%%%%%%%%%%%%%%%%%%%%%%%%%%%%%%%%%%%%%%%%%%
\newpage
\appendix
\onecolumn
\section{Full Proofs}\label{fullproofs}

\subsection{Proof of Theorem \ref{overfit}}
We first recall the setting of Theorem \ref{overfit}.
\overfitsetting*

And now we prove:
\overfit*
\begin{proof}
Our proof technique will be to first lower bound the gap $L(A, k) - L(A, 2k)$, and then to construct a $B$ matrix that closely approximates the $2k$-sparse combinations of the columns of $A$.

From the definition of $L(B, k)$ we have that:
\begin{align*}
    L(A, k) - L(A, 2k) = \E_y \left[\min_{\norm{\hat{z}}_0 = k} \norm{y - A \hat{z}} ^ 2\right] - \E_y \left[\min_{\norm{\hat{z}}_0 = 2k} \norm{y - A \hat{z}} ^ 2\right] \numberthis \label{LAdiff}
\end{align*}
Now let $\hat{z}^*(y) = \argmin_{\norm{\hat{z}}_0 = k} \norm{y - A \hat{z}}$ and $S^* = \supp{(\hat{z}^*)}$, and further define:
\begin{align*}
    \Tilde{z}(y) = \argmin_{\norm{\hat{z}}_0 = k} \norm{(y - A\hat{z}^*(y)) - A \hat{z}} \numberthis \label{tildez}
\end{align*}
We will also use $\Tilde{S} = \supp{(\Tilde{z}(y))}$. For convenience, we will write $\hat{z}^*$ and $\tilde{z}$ when $y$ is clear from context. Applying this notation to Equation \eqref{LAdiff} gives:
\begin{align*}
    L(A, k) - L(A, 2k) &\geq \E_y [\norm{y - A \hat{z}^*}^2] - \E_y [\norm{y - A \hat{z}^* - A \Tilde{z}}^2] \\
    &= \E_y [2\inrprod{y - A \hat{z}^*}{A \tilde{z}}] - \E_y [\norm{A \tilde{z}}^2] \\
    &= \E_y [\norm{A \tilde{z}}^2] \numberthis \label{LAdifflb}
\end{align*}

Where we obtained the last line above by using the fact that $A \tilde{z}$ is the orthogonal projection of $y - A \hat{z}^*$ on to the span of $A_{\Tilde{S}}$. Now using the fact that $A$ is $(2k, \delta)$-RIP we have that:
\begin{align*}
    \E_y [\norm{A \tilde{z}}^2] &\geq (1 - \delta)^2 \E_y \left[\norm{\tilde{z}}^2\right] \\
    &\geq (1 - \delta)^2 \E_y \left[\norm{A_{\Tilde{S}}^{+} (y - A\hat{z}^*)}^2\right] \\
    &\geq (1 - \delta)^4 \E_y \left[\norm{A_{\Tilde{S}}^{T} (y - A\hat{z}^*)}^2\right] \\
    &= (1 - o(1)) \E_y \left[\norm{A_{\Tilde{S}}^{T} (y - A\hat{z}^*)}^2\right] \numberthis \label{ripstep}
\end{align*}

Where above we used the fact that $A_{\Tilde{S}}^{+} = (A_{\Tilde{S}}^T A_{\Tilde{S}})^{-1} A_{\Tilde{S}}^{T}$ and RIP to obtain $\norm{(A_{\Tilde{S}}^T A_{\Tilde{S}})^{-1}}_{op} \geq 1/(1 + \delta)$, which led to the penultimate step. It remains to compute (or lower bound) the expectation in Equation \eqref{ripstep}. Towards this end, we let $S$ denote a (uniformly) random subset of size $k$ from $[p]$. Then we have that (using Assumption \ref{as1}):
\begin{align*}
    \E_y \left[\norm{A_{\Tilde{S}}^{T} (y - A\hat{z}^*)}^2\right] &\geq \E_y \left[\E_S \left[\norm{A_{S}^{T} (y - A\hat{z}^*)}^2\right]\right] \\
    &= \frac{k}{p} \E_y \left[\norm{A^{T} (y - A\hat{z}^*)}^2\right] \\
    &\geq \frac{k}{p} \sigma_{\min}^2(A) \E_y \left[\norm{y - A\hat{z}^*}^2\right] \\
    &= \Omega\left( \frac{k}{d} \E_y \left[\norm{y - A\hat{z}^*}^2\right] \right) \numberthis \label{randomS}
\end{align*}

Now the expectation in Equation \eqref{randomS} can be lower bounded in the same vein as Equation \eqref{ripstep} (i.e. relying on the RIP property). Below we use $S_{2k}^*$ to denote the optimal support for the minimization problem.
\begin{align*}
    \E_y \left[\norm{y - A\hat{z}^*}^2\right] &= \E_{z, \epsilon} \left[\min_{\norm{\hat{z}}_0 = k} \norm{\epsilon - A(\hat{z} - z)}^2\right] \\
    &\geq \E_{\epsilon} \left[\min_{\norm{\hat{z}}_0 = 2k} \norm{\epsilon - A\hat{z}}^2\right] \\
    &= \E_{\epsilon} \left[\norm{\epsilon}^2\right] - 2\E_{\epsilon}\left[\inrprod{A_{S_{2k}^*}^+ \epsilon}{\epsilon}\right] + \E_{\epsilon}\left[\norm{A_{S_{2k}^*}^+ \epsilon}^2\right] \\
    &\geq \E_{\epsilon} \left[\norm{\epsilon}^2\right] - (1 + o(1)) \E_{\epsilon} \left[\norm{A_{S_{2k}^*}^T \epsilon}^2\right] \\
    &\geq d\sigma^2 - (1 + o(1)) 2k\E_{\epsilon}\left[\max_{i \in [p]} (A_i^T \epsilon)^2\right] \\
    &\geq d\sigma^2 - O\left(k\sigma^2 \log p\right) \\
    &= \Omega(d\sigma^2) \numberthis \label{finalexp}
\end{align*}

Where above we used Lemma \ref{max-chi-square} since the random variables $(A_i^T \epsilon)^2$ follows a scaled chi-square distribution with degree of freedom 1, and for the last line we use $k=o\left(\frac{d}{\log p}\right)$. Now putting Equations \eqref{LAdifflb}-\eqref{finalexp} together, we obtain:
\begin{align*}
    L(A, k) - L(A, 2k) \geq \Omega(k\sigma^2) \numberthis \label{improvement}
\end{align*}

Given the gap between $L(A, k)$ and $L(A, 2k)$ shown in Equation \eqref{improvement}, our goal is now to construct a matrix $B$ such that we can approximate sufficiently large $2k$-sparse combinations of the columns of $A$ via $B\hat{z}$ (where $\hat{z}$ is $k$-sparse). We recall from standard concentration of measure arguments (see \citet{hdp} for details) that $\Prob(\norm{\epsilon}^2 \geq 2d\sigma^2) \leq \exp(-\Omega(d))$. Furthermore, by Assumption \ref{as1}, $\norm{Az} \leq \Lambda(z) (1 + o(1))$ with probability at least $1 - 1/d$. Thus, we only need the columns of $B$ to approximate $Ax$ for 2-sparse $x$ (since we are interested in $B\hat{z}$ and $\hat{z}$ is $k$-sparse) and $\norm{Ax} \leq \gamma_1\max(\sigma\sqrt{d}, \Lambda(z))$ for an appropriately large constant $\gamma_1$ (as this will imply we get the same gap as Equation \ref{finalexp}).

To do this, we can construct $\epsilon$-nets for each of the following sets (indexed by the different possible 2-sparse supports $S \subset [p]$):
\begin{align*}
    V_S = \{Ax \ | \ \supp{(x)} = S, \ \norm{Ax} \leq \gamma_1\max(\sigma\sqrt{d}, \Lambda(z))\} \numberthis \label{enetsets}
\end{align*}

Since $A$ has $p$ columns, we need $\Theta(p^2)$ such $\epsilon$-nets. As long as we choose $\epsilon = \gamma_2\sigma^2$ with $\gamma_2$ a constant, we can approximate $2k$-sparse combinations of the columns of $A$ with error $k\gamma_2\sigma^2$ using $k$-sparse combinations from these nets, which is sufficient for our purposes given the result of Equation \eqref{improvement}.

Now let the columns of $B$ be the union of the $\epsilon$-nets for the sets $V_S$ and define $\mathcal{E} = \{\norm{Az} + \norm{\epsilon} \leq \gamma_1\max(\sigma\sqrt{d}, \Lambda(z))\}$. After choosing $\gamma_2$ to be sufficiently small, we then get from Equations \eqref{LAdifflb}-\eqref{improvement} and the fact that $\Prob\left(\mathcal{E}\right) \geq 1 - 1/d$:
\begin{align*}
    L(A, k) - L(B, k) &\geq \E_y \left[\norm{A \tilde{z}}^2 \ \big\vert\ \mathcal{E} \right] \ \Prob\left(\mathcal{E}\right) - k\gamma_2\sigma^2 \\
    &= \Omega(k\sigma^2) \numberthis \label{finalresult}
\end{align*}

Noting that the $\epsilon$-nets for each $V_S$ are of size $O(\max(d\sigma^2, \Lambda(z)^2)/\sigma^2)$ from our choice of $\epsilon$ (once again, refer to \citet{hdp} for bounds on the size of $\epsilon$-nets), this construction of $B$ requires $O(p^2 \max(d\sigma^2, \Lambda(z)^2)/\sigma^2)$ columns. As we can choose these columns to be different from those of $A$ by $\gamma_2\sigma^2$ (in norm), we obtain the desired result.
\end{proof}

\subsection{Proof of Theorem \ref{masking}}
Again, we first recall the setting of Theorem \ref{masking}.
\masksetting*

In order to prove Theorem \ref{masking}, we will need a result from \citet{omperror}, which we restate below.
\begin{theorem}[Theorem 9 in \citet{omperror}]\label{omperrorthm}
For $y \sim Az + \epsilon$ with $\epsilon \sim \Gauss(0, \sigma^2 \Id)$ and $A \in \R^{d \times p}$ being $\mu$-incoherent with $\mu < 1/(2k - 1)$, let us define:
\begin{align*}
    \mathcal{S} = \left\{A_i : i \in [p],\ \abs{z_i} \geq \frac{2\sigma \sqrt{k} \sqrt{2(1 + \eta)\log p}}{1 - (2k - 1)\mu}\right\} \numberthis \label{suppset}
\end{align*}
Then OMP (as defined in Algorithm) selects a column from $\mathcal{S}$ at each step with probability at least $1 - \frac{1}{p^{\eta}\sqrt{2\log p}}$ for $\eta \geq 0$.
\end{theorem}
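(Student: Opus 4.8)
This is a restatement of a published OMP noise-robustness bound, so the plan is to follow the standard template: reduce the probabilistic claim to a deterministic one by conditioning on a single favorable event for the noise, and then run an induction over the iterations of OMP whose inductive step is controlled through the coherence of $A$.

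The first step is to isolate the noise event. Since the columns of $A$ are unit-norm, each $\inrprod{A_i}{\epsilon}$ is distributed as $\Gauss(0,\sigma^2)$. Applying the Gaussian tail bound $\Prob(\abs{\Gauss(0,\sigma^2)} > \sigma t) \le \sqrt{2/\pi}\,t^{-1}e^{-t^2/2}$ with $t = \sqrt{2(1+\eta)\log p}$ and taking a union bound over the $p$ columns, the event
\[
\mathcal{E} = \Bigl\{ \max_{i \in [p]} \abs{\inrprod{A_i}{\epsilon}} \le \sigma\sqrt{2(1+\eta)\log p} \Bigr\}
\]
satisfies $\Prob(\mathcal{E}) \ge 1 - \frac{1}{p^{\eta}\sqrt{\pi(1+\eta)\log p}} \ge 1 - \frac{1}{p^{\eta}\sqrt{2\log p}}$, using $\pi(1+\eta) \ge 2$. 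Everything below is deterministic given $\mathcal{E}$.

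Next, set $\Lambda = \supp(z)$ and $I_{\mathcal{S}} = \{\, i \in [p] : A_i \in \mathcal{S}\,\} \subseteq \Lambda$, and argue by induction on the iteration count that, on $\mathcal{E}$, OMP selects every index of $I_{\mathcal{S}}$ before selecting any atom outside $I_{\mathcal{S}}$ --- which is precisely the claim that each chosen column lies in $\mathcal{S}$ as long as some column of $\mathcal{S}$ is still unselected. Suppose the atoms chosen so far form a set $T \subseteq I_{\mathcal{S}}$; let $P_T$ denote the orthogonal projector onto $\mathrm{span}(A_T)$ and $\Gamma = \Lambda \setminus T$. The OMP residual is then $r = (\Id - P_T)(A_{\Gamma} z_{\Gamma} + \epsilon)$, and $\inrprod{A_i}{r} = 0$ for $i \in T$ by least-squares optimality, so it suffices to compare $\abs{\inrprod{A_j}{r}}$ over $j \notin T$. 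I would split $\inrprod{A_j}{r} = \inrprod{A_j}{(\Id - P_T)A_{\Gamma} z_{\Gamma}} + \inrprod{A_j}{(\Id - P_T)\epsilon}$ and estimate the two pieces using coherence. For the noise piece, writing $(\Id - P_T)\epsilon = \epsilon - A_T(A_T^{\top}A_T)^{-1}A_T^{\top}\epsilon$, I would use $\norm{A_j^{\top}A_T} \le \mu\sqrt{\abs{T}}$, the Gershgorin bound $\norm{(A_T^{\top}A_T)^{-1}}_{op} \le (1 - (k-1)\mu)^{-1}$, and $\norm{A_T^{\top}\epsilon} \le \sqrt{\abs{T}}\,\max_i \abs{\inrprod{A_i}{\epsilon}}$ (valid on $\mathcal{E}$), with $\mu < 1/(2k-1)$ keeping the constant bounded, to get a uniform bound $\abs{\inrprod{A_j}{(\Id - P_T)\epsilon}} \le c\,\sigma\sqrt{k}\sqrt{2(1+\eta)\log p}$. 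For the signal piece, I would lower-bound the largest correlation over the unselected large atoms by a constant multiple of $\beta\,\norm{z_{\Gamma}}_{\infty}$ with $\beta = 1 - (2k-1)\mu$ --- via the near-isometry of $A_{\Lambda}$ from coherence, i.e.\ a Schur-complement lower bound on $A_{\Gamma}^{\top}(\Id - P_T)A_{\Gamma}$ tested against $z_{\Gamma}$ --- and upper-bound $\abs{\inrprod{A_j}{(\Id - P_T)A_{\Gamma} z_{\Gamma}}}$ for every $j \notin I_{\mathcal{S}}$ strictly below this, exactly as in Tropp's exact recovery condition \citep{omptropp}. Since $I_{\mathcal{S}} \setminus T \ne \emptyset$, the largest coefficient in $\Gamma$ belongs to $I_{\mathcal{S}}$ and is at least $\frac{2\sigma\sqrt{k}\sqrt{2(1+\eta)\log p}}{1 - (2k-1)\mu}$; combining the signal lower bound with the noise bound, the correlation with that atom strictly exceeds the correlation with every atom outside $\mathcal{S}$ (the out-of-support atoms by the recovery-condition estimate, the remaining small-coefficient true atoms because their coefficients lie below the threshold), which closes the induction.

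The main obstacle is the coherence bookkeeping in the inductive step: one must carry the projector $\Id - P_T$ through both the signal and the noise estimates and combine the self-term, the intra-$\Gamma$ cross-terms, and the projected-noise leakage so that the surviving factor in the denominator is exactly $1 - (2k-1)\mu$ rather than a looser $1 - (k-1)\mu$ or $1 - k\mu$, and the numerator constant is exactly $2$; the sharpest sub-case is ensuring the separation holds against "just below threshold" true atoms and not only against the out-of-support atoms. The remaining ingredients --- the Gaussian tail bound, the union bound, and the Gershgorin / Schur-complement inequalities --- are routine.
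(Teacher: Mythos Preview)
The paper does not prove this statement at all: it is quoted verbatim as Theorem~9 of \citet{omperror} and used as a black box in the proof of Theorem~\ref{masking}. So there is no ``paper's own proof'' to compare against; your proposal is effectively a sketch of the original Cai--Wang argument, and it follows the same template (condition on the uniform Gaussian-tail event $\max_i|\langle A_i,\epsilon\rangle|\le \sigma\sqrt{2(1+\eta)\log p}$, then induct on the OMP iterations using the $\mu$-incoherence of $A$ to separate the signal correlation of the largest unselected strong atom from everything else). Your identification of the delicate point---getting the sharp factor $1-(2k-1)\mu$ and the numerator constant $2$ after pushing the projector $\Id-P_T$ through both the signal and noise terms---is exactly the place where the bookkeeping matters in the original proof; the rest is routine.
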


Now we may prove:
\masking*

\begin{proof}
We have from the definition of $L_{mask}$ that:
\begin{align*}
    L_{mask}(B, k, M) &= \E_{z, \epsilon} \left[\norm{[Az]_{[d] \setminus M} + [\epsilon]_{[d] \setminus M} - [B\hat{z}]_{[d] \setminus M}}^2\right] \\
    &=  \E_{z, \epsilon} \left[\norm{[Az]_{[d] \setminus M} - [B\hat{z}]_{[d] \setminus M}}^2\right] + E_{\epsilon} \left[\norm{[\epsilon]_{[d] \setminus M}}^2\right] - \E_{z, \epsilon} \left[\inrprod{[B\hat{z}]_{[d] \setminus M}}{[\epsilon]_{[d] \setminus M}}\right] \\
    &= \E_{z, \epsilon} \left[\norm{[Az]_{[d] \setminus M} - [B\hat{z}]_{[d] \setminus M}}^2\right] + (d - \abs{M})\sigma^2 \\
    &= \E_{z, \epsilon} \left[\norm{P_{[d] \setminus M}Az - P_{[d] \setminus M}B\hat{z}}^2\right] + (d - \abs{M})\sigma^2 \numberthis \label{redLMloss}
\end{align*}

Since $[B\hat{z}]_{[d] \setminus M}$ is necessarily independent\footnote{There is actually a slight technicality here; we need $g_{\textnormal{OMP}}$ to be Borel measurable, which is the case because it consists of the composition of Borel measurable functions.} of $[\epsilon]_{d \setminus M}$ (by the construction of $\hat{z}$). Now the quantity in Equation \ref{redLMloss} depending on $B$ looks almost identical to the prediction risk considered in linear regression.

With this in mind, let us define: 
\begin{align*}
    \mathcal{R}_M(\hat{y}) = \E_{z, \epsilon} \left[\norm{P_{[d] \setminus M}Az - \hat{y}}^2\right] \numberthis \label{predrisk}
\end{align*}
Where $\hat{y}$ is any estimator that depends only on $[y]_M$ (i.e. in the interest of brevity we are omitting writing $\hat{y}([Az + \epsilon]_M)$). We can lower bound Equation \eqref{redLMloss} by analyzing $\mathcal{R}_M(\hat{y})$:
\begin{align*}
    \inf_{\hat{y}} \mathcal{R}_M &= \inf_{\hat{y}} \E_{z, \epsilon} \left[\norm{P_{[d] \setminus M}Az - \hat{y}}^2\right] \\
    &= \inf_{\hat{y}} \E_{z, \epsilon} \left[ \E_{z, \epsilon} \left[\norm{P_{[d] \setminus M}A_{\supp(z)}[z]_{\supp(z)} - \hat{y}}^2 \ \bigg\vert\ \supp(z) \right]  \right] \numberthis \label{bayesrisk}
\end{align*}
Equation \eqref{bayesrisk} can be lower bounded by considering the infimum over the inner expectation with respect to estimators $\hat{y}$ that have access to $[Az + \epsilon]_M$ and the support $S^* = \supp(z)$. In this case, the Bayes estimator $\hat{y}^*$ is:
\begin{align*}
    \hat{y}^* &= \E_{z, \epsilon} \left[P_{[d] \setminus M}A_{S^*}[z]_{S^*} \ \bigg\vert\ P_M A_{S^*}[z]_{S^*} + [\epsilon]_M \right] \\
    &= P_{[d] \setminus M}A_{S^*} \E_{z, \epsilon} \left[[z]_{S^*} \ \bigg\vert\ P_M A_{S^*}[z]_{S^*} + [\epsilon]_M \right] \numberthis \label{bayesopt}
\end{align*}
Since $[z]_{S^*} \sim \Gauss(0, \sigma_z^2 \Id_k)$, we can explicitly compute the conditional expectation in Equation \eqref{bayesopt}. Indeed, it is just the ridge regression estimator:
\begin{align*}
    \hat{y}^* = P_{[d] \setminus M}A_{S^*} \left(\Lambda_{S^*}^T \Lambda_{S^*} + \frac{1}{\sigma_z^2} \Id_k\right)^{-1} \Lambda_{S^*}^T (P_M A_{S^*}[z]_{S^*} + [\epsilon]_M) \numberthis \label{ridgereg}
\end{align*}
Where we have set $\Lambda_{S^*} = P_M A_{S^*}$ above to keep notation manageable. Thus, putting all of the above together we have:
\begin{align*}
    L_{mask}(B, k, M) \geq \mathcal{R}_M(\hat{y}^*) + (d - \abs{M})\sigma^2 \numberthis \label{risklb}
\end{align*}
Now let $\hat{y}_{LS}$ be the least squares estimator with access to the support $\supp(z)$:
\begin{align*}
    \hat{y}_{LS} = P_{[d] \setminus M}A_{S^*} \Lambda_{S^*}^+ (P_M A_{S^*}[z]_{S^*} + [\epsilon]_M) \numberthis \label{lsq}
\end{align*}
Then we have $\mathcal{R}_M(\hat{y}_{LS}) \to \mathcal{R}_M(\hat{y}^*)$ as $\sigma_z^2 \to \infty$. If we can now show that $\mathcal{R}_M(P_{[d] \setminus M} A \hat{z}) \to \mathcal{R}_M(\hat{y}_{LS})$, then we will be done by Equation \eqref{risklb}.

Showing this essentially boils down to controlling the error of $\hat{z}$ when OMP fails to recover the true support $S^*$ (because when it recovers the true support, $P_{[d] \setminus M} A \hat{z}$ is exactly $\hat{y}_{LS}$). We do this by appealing to Theorem \ref{omperrorthm}.

Recall that $[\hat{z}]_S = \Lambda_S^+ [y]_M$, where $\Lambda_S = P_M A_S$ with $S = \supp(\hat{z})$ being the support predicted by OMP. Letting $\hat{z}_{LS}$ be the vector whose non-zero components correspond to $[\hat{z}_{LS}]_{S^*} = \Lambda_{S^*}^+ [y]_M$, it will suffice to show $\norm{\hat{z}_{LS} - \hat{z}}^2 \to 0$ as $\sigma_z \to \infty$, since then we will be done due to the fact that $\norm{P_{[d] \setminus M} A}_{op}$ is constant with respect to $\sigma_z$.

Now letting $\Tilde{z} = \hat{z} - \Lambda_S^+ [\epsilon]_M$ (i.e. $\tilde{z}$ represents the part of the signal $z$ recovered by OMP), we have:
\begin{align*}
    \norm{\hat{z}_{LS} - \hat{z}}^2 &= \norm{z - \tilde{z} + (\Lambda_{S^*}^+ - \Lambda_S^+)[\epsilon]_M}^2 \\
    &\leq \norm{z - \tilde{z}}^2 + \norm{(\Lambda_{S^*}^+ - \Lambda_S^+)[\epsilon]_M}^2 + 2\norm{z - \tilde{z}}\norm{(\Lambda_{S^*}^+ - \Lambda_S^+)[\epsilon]_M} \numberthis \label{zlsompgap}
\end{align*}

We begin by first analyzing $\norm{z - \tilde{z}}^2$. To do so, we introduce the notation $[z]_{U, 0}$ to represent the vector in $\R^k$ whose non-zero entries correspond to $[z]_U$ for $U \subset [d], \ \abs{U} \leq k$. Then we can make use of the following decomposition of $\Lambda_{S^*} [z]_{S^*}$:
\begin{align*}
    \Lambda_{S^*} [z]_{S^*} = \Lambda_{S} [z]_{S^* \cap S, 0} + \Lambda_{S^*} [z]_{S^* \setminus S, 0} \numberthis \label{lscancel}
\end{align*}

From Equation \eqref{lscancel} we get:
\begin{align*}
    \norm{z - \tilde{z}}^2 &= \norm{[z]_{S^* \setminus S, 0} - \Lambda_S^+ \Lambda_{S^*} [z]_{S^* \setminus S, 0}}^2 \\
    &\leq \norm{[z]_{S^* \setminus S, 0}}^2 + \norm{(\Lambda_S^T \Lambda_S)^{-1} \Lambda_S^T \Lambda_{S^*} [z]_{S^* \setminus S, 0}}^2 + 2\max\left(\norm{[z]_{S^* \setminus S, 0}}^2, \norm{\Lambda_S^+ \Lambda_{S^*} [z]_{S^* \setminus S, 0}}^2\right) \\
    &= \sum_{i \in S^* \setminus S} O(z_i^2) \numberthis \label{zgap}
\end{align*}
where we passed from the penultimate to the last line by using the $\mu$-incoherence of $P_M A$ to control the middle term in the bound. With Equation \eqref{zgap} in hand, we are finally in a position to apply Theorem \ref{omperrorthm}. Let $\eta = C' \log \sigma_z$ for a sufficiently large constant $C'$. Now for convenience we define:
\begin{align*}
    \gamma = \frac{2\sigma \sqrt{k} \sqrt{2(1 + \eta)\log p}}{1 - (2k - 1)\mu} \numberthis \label{gamdef}
\end{align*}
which corresponds to the lower bound in Equation \eqref{suppset}. Using Theorem \ref{omperrorthm} with Equation \eqref{zgap} we obtain:
\begin{align*}
    \E_{z, \epsilon} \left[\norm{z - \tilde{z}}^2\right] &\leq \sum_{i \in S^*} \Prob(\{\abs{z_i} \geq \gamma\} \cap \{i \notin S\}) O(\E[z_i^2]) + \Prob(\abs{z_i} < \gamma) O(\gamma^2) \\
    &= \sum_{i \in S^*} O\left(\frac{\sigma_z^2}{\sigma_z^{C'} \sqrt{\log p}} + \frac{\gamma^3}{\sigma_z}\right) \numberthis \label{finalzz}
\end{align*}
And clearly Equation \eqref{finalzz} goes to 0 as $\sigma_z \to \infty$. We can apply similar analysis techniques to the term $\norm{(\Lambda_{S^*}^+ - \Lambda_S^+)[\epsilon]_M}^2$ in Equation \eqref{zlsompgap} as well, but for this term we can afford to be less precise. 

Namely, when $S = S^*$, this term is 0. The probability that $S \neq S^*$ can be bounded as:
\begin{align*}
    \Prob(S \neq S^*) &\leq O\left(\frac{k}{\sigma_z^{C'} \sqrt{\log p}}\right) \Prob(\min_{i \in S^*} \abs{z_i} \geq \gamma) \\
    &= O\left(\frac{k \gamma^k}{\sigma_z^{C' + k} \sqrt{\log p}}\right) \numberthis \label{epsprob}
\end{align*}
where again above we used the naive bound for $\Prob(\min_{i \in S^*} \abs{z_i} \geq \gamma)$ (i.e. replacing the density with 1 and integrating from $\pm \gamma$). Now we have:
\begin{align*}
    \E_{z, \epsilon} \left[\norm{(\Lambda_{S^*}^+ - \Lambda_S^+)[\epsilon]_M}^2\right] &\leq \Prob(S \neq S^*) \E_{z, \epsilon}\left[\norm{\Lambda_{S^*}^+ [\epsilon]_M}^2 + \norm{\Lambda_{S}^+ [\epsilon]_M}^2 + 2\max\left(\norm{\Lambda_{S^*}^+ [\epsilon]_M}^2, \norm{\Lambda_{S}^+ [\epsilon]_M}^2\right)\right] \\
    &\leq \Prob(S \neq S^*) O\left(k\E_{\epsilon} \Bigl[\max_{i \in [p]} (A_i^T \epsilon)^2\Bigr]\right) \\
    &\leq \Prob(S \neq S^*) O(k\sigma^2 \log p) \numberthis \label{epserror}
\end{align*}
Putting together Equations \eqref{finalzz} and \eqref{epserror} shows that Equation \eqref{zlsompgap} goes to 0 as $\sigma_z \to \infty$, which proves the result.
\end{proof}

\subsection{Auxiliary Lemmas}
\begin{lemma}
\label{max-chi-square}
Let $X_1, \cdots, X_n$ be $n$ chi-square random variables with 1 degree of freedom, then
\begin{equation*}
    \E\left[\max_{i\in[n]}X_i\right] = O(\log n).
\end{equation*}
\begin{proof}
We bound the maximum via the moment-generating function.

From Jensen's inequality, for $t\in(0, \frac12)$, we have
\begin{align*}
    \exp\left(t \E\left[\max_{i\in[n]}X_i\right]\right) &\leq\E\left[\exp\left(t \max_{i\in[n]}X_i\right)\right] \leq \sum_{i=1}^n \E\left[e^{tX_i}\right] = n(1-2t)^{-\frac12}.
\end{align*}
Setting $t=\frac13$ gives us
\begin{equation*}
    \E\left[\max_{i\in[n]}X_i\right]\leq 3\log n -\frac32\log\frac13 = O(\log n).
\end{equation*}
\end{proof}
\end{lemma}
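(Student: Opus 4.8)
The plan is to bound the maximum through its moment-generating function, a device that deliberately avoids any appeal to independence among the $X_i$ (which matters here, since in the application $X_i = (A_i^T\epsilon)^2/\sigma^2$ and these share the common noise vector $\epsilon$). First I would apply Jensen's inequality in the form $\exp\!\left(t\,\E[\max_{i\in[n]}X_i]\right) \le \E\!\left[\exp\!\left(t\max_{i\in[n]}X_i\right)\right]$, valid for every $t>0$ because $x\mapsto e^{tx}$ is convex. The reason for passing to the exponential is that the maximum then obeys a union-type bound pointwise: $\exp\!\left(t\max_{i\in[n]}X_i\right) = \max_{i\in[n]} e^{tX_i} \le \sum_{i=1}^n e^{tX_i}$, so taking expectations gives $\E\!\left[\exp\!\left(t\max_{i\in[n]}X_i\right)\right] \le \sum_{i=1}^n \E\!\left[e^{tX_i}\right]$ regardless of the joint distribution of the $X_i$.

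Next I would substitute the closed form for the moment-generating function of a chi-square variable with one degree of freedom, $\E\!\left[e^{tX_i}\right] = (1-2t)^{-1/2}$, which is finite precisely for $t\in(0,\tfrac12)$. Combining with the previous display yields $\exp\!\left(t\,\E[\max_{i\in[n]}X_i]\right) \le n\,(1-2t)^{-1/2}$, and then taking logarithms and dividing by $t$ gives $\E[\max_{i\in[n]}X_i] \le \tfrac{1}{t}\log n - \tfrac{1}{2t}\log(1-2t)$.

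Finally I would fix any convenient constant $t$ in the admissible range, for instance $t=\tfrac13$, turning the coefficient of $\log n$ into a universal constant and producing $\E[\max_{i\in[n]}X_i] \le 3\log n + \tfrac32\log 3 = O(\log n)$. There is essentially no obstacle here: the only points requiring care are restricting $t$ to $(0,\tfrac12)$ so that the chi-square moment-generating function exists, and observing that the argument never uses independence of the $X_i$, so the bound applies verbatim to the dependent quantities $(A_i^T\epsilon)^2/\sigma^2$ appearing in the proof of Theorem~\ref{overfit}.
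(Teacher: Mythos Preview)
Your proof is correct and follows essentially the same approach as the paper's: Jensen's inequality applied to the convex exponential, the pointwise bound $\max_i e^{tX_i}\le\sum_i e^{tX_i}$, the chi-square moment-generating function $(1-2t)^{-1/2}$, and the choice $t=\tfrac13$. Your version adds the explicit remark that independence is never used, which is a useful clarification for the application in Theorem~\ref{overfit}.
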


\end{document}